\documentclass{article} 
\usepackage{iclr2026_conference,times}

\usepackage{amsmath,amsfonts,bm}

\def\eqref#1{equation~\ref{#1}}

\def\1{\bm{1}}

\DeclareMathAlphabet{\mathsfit}{\encodingdefault}{\sfdefault}{m}{sl}
\SetMathAlphabet{\mathsfit}{bold}{\encodingdefault}{\sfdefault}{bx}{n}

\DeclareMathOperator*{\argmax}{argmax}
\DeclareMathOperator*{\argmin}{argmin}

\usepackage{hyperref}
\usepackage{url}
\usepackage{enumitem}

\usepackage{siunitx}
\usepackage{multirow}
\usepackage{array}  
\usepackage{booktabs}
\usepackage{subcaption}
\usepackage{nicematrix}
\usepackage{amsthm}
\usepackage{algorithm}
\usepackage{caption}   
\usepackage{graphicx}
\usepackage{amsmath}
\usepackage{amssymb}
\usepackage{xcolor}
\usepackage{colortbl}
\usepackage{pgfmath}
\usepackage{cleveref}
\usepackage{bm}  
\newtheorem{theorem}{Theorem}[section]

\newtheorem{lemma}[theorem]{Lemma}

\usepackage{thmtools, thm-restate}
\usepackage{xspace}
\newcommand{\ourmethod}{\textsc{WS-KAN}\xspace}
\newcommand{\Mst}{\textsc{MNIST}\xspace}
\newcommand{\KMst}{\textsc{K-MNIST}\xspace}
\newcommand{\FMst}{\textsc{F-MNIST}\xspace}
\newcommand{\Cif}{\textsc{CIFAR10}\xspace}

\crefname{appendix}{App.}{App.}
\crefname{equation}{Eq.}{Eq.}

\usepackage{wrapfig}
\usepackage{mathtools}
\usepackage{bm}  

\newcommand{\bmp}{\bm{\phi}}
\newcommand{\bmx}{\bm{x}}
\definecolor{indist}{RGB}{197,224,247}
\definecolor{ood}{RGB}{255,214,199}

\title{A Graph Meta-Network for Learning on Kolmogorov-Arnold Networks}

\author{Guy Bar-Shalom\thanks{This work was completed during an internship at Meta.} \\
Technion, Meta
\And
Ami Tavory \\
Meta
\And
Itay Evron \\
Meta
\And
Maya Bechler-Speicher \\
Meta
\AND
Ido Guy \\
Meta, Ben-Gurion University of the Negev
\And
Haggai Maron \\
Technion
}

\iclrfinalcopy 
\begin{document}

\maketitle

\begin{abstract}
Weight-space models learn directly from the parameters of neural networks, enabling tasks such as predicting their accuracy on new datasets.  Naive methods -- like applying MLPs to flattened parameters -- perform poorly, making the design of better weight-space architectures a central challenge. While prior work leveraged permutation symmetries in standard networks to guide such designs, no analogous analysis or tailored architecture yet exists for Kolmogorov–Arnold Networks (KANs). In this work, we show that KANs share the same permutation symmetries as MLPs, and propose the \emph{KAN-graph}, a graph representation of their computation. 
Building on this, we develop \ourmethod, the first weight-space architecture that learns 
on KANs, which naturally accounts for their symmetry. We analyze {\ourmethod}’s expressive power, showing it can replicate an input KAN’s forward pass - a standard approach for assessing expressiveness in weight-space architectures. We construct a comprehensive ``zoo'' of trained KANs spanning diverse tasks, which we use as benchmarks to empirically evaluate \ourmethod. Across all tasks, \ourmethod consistently outperforms structure-agnostic baselines, often by a substantial margin. Our code is available at \href{https://github.com/BarSGuy/KAN-Graph-Metanetwork}{https://github.com/BarSGuy/KAN-Graph-Metanetwork}.
\end{abstract}

\section{Introduction}
Deep neural networks are now powerful tools for prediction, generation, and beyond. A recent perspective \citep{navon2023equivariant,zhang2023neural} views their parameters not merely as weights, but as data -- enabling the design of \emph{weight-space models}, which are networks that operate directly on the parameters of other networks. This shift makes it possible to predict test accuracy \citep{eilertsen2020classifying,unterthiner2020predicting}, generate new sets of weights \citep{erkocc2023hyperdiffusion}, and classify or synthesize Implicit Neural Representations (INRs; \citealt{mescheder2019occupancy,sitzmann2020implicit}), all through a single forward pass. 

A straightforward way to design weight-space (WS) models is to flatten all weights and biases into a single feature vector for prediction. 
However, this overlooks \emph{neuron permutation symmetries} \citep{hecht1990algebraic,brea2019weight}, i.e., parameter transformations that keep the underlying function computed by the neural network unchanged. 
Thus, such naive models may produce different predictions for equivalent reorderings. While pioneering approaches used generic architectures to process model parameters \citep{schürholt2022hyperrepresentationsselfsupervisedrepresentationlearning,schürholt2022hyperrepresentationsgenerativemodelssampling,schürholt2022modelzoosdatasetdiverse}, more recent developments incorporate those networks' symmetries, either through weight sharing in linear layers \citep{navon2023equivariant,zhou2023permutation} (via geometric deep learning principles;  \citealt{bronstein2021geometric}), or by treating networks as graphs \citep{lim2024graphmeta,kalogeropoulos2024scale,zhang2023neural,kofinas2024graph} and applying Graph Neural Networks (GNNs; \citealt{scarselli2008graph,kipf2016semi,gilmer2017neural}), which naturally respect these symmetries. 

Previous work on WS learning has only begun to explore the potential of characterizing and applying WS models across different architectures. 
Initially, research has concentrated on MLPs with straightforward extensions to CNNs \citep{navon2023equivariant,zhou2023permutation}. Recent developments have expanded to transformer architectures \citep{tran2024equivariant} and architectures incorporating tensor symmetry structures \citep{zhou2024universal}. However, designing WS models for diverse architectural paradigms remains an important open challenge that warrants further investigation.

In this work, we introduce the first WS model specifically designed to process an emerging class of neural networks: Kolmogorov–Arnold Networks (KANs; \citealt{liu2025kan}).
\emph{Why design a weight-space model that takes KANs as input?} KANs represent a fundamentally different neural paradigm—rather than employing scalar weight matrices with fixed nonlinear activations, they construct networks from matrices of learnable univariate functions, while preserving the universal approximation properties of conventional neural networks \citep{kolmogorov1954conservation,braun2009constructive}. This architectural shift yields compelling advantages over standard MLPs, including superior parameter efficiency \citep{koenig2025leankan}, accelerated neural scaling (i.e., performance scales faster than expected as model size increases) \citep{liu2025kan}, and notably, enhanced interpretability \citep{baravsin2024exploring}. The learnable functions that replace scalar weights can be directly visualized and analyzed, offering unprecedented insight into the network's decision-making process.

As KANs gain traction within the deep learning community, we anticipate a proliferation of trained KAN models across diverse applications. WS learning will become increasingly valuable for helping practitioners understand, compare, and leverage these models effectively. However, developing weight-space models for KANs presents several challenges. The network architecture differs fundamentally from those previously studied in the weight-space literature, as its learnable components are functions rather than simple scalar parameters. Additionally, the structural understanding from symmetry perspectives that have been developed for traditional neural networks remains largely unexplored for KANs.

\begin{figure}[t]
\centering\includegraphics[width=0.9\linewidth]{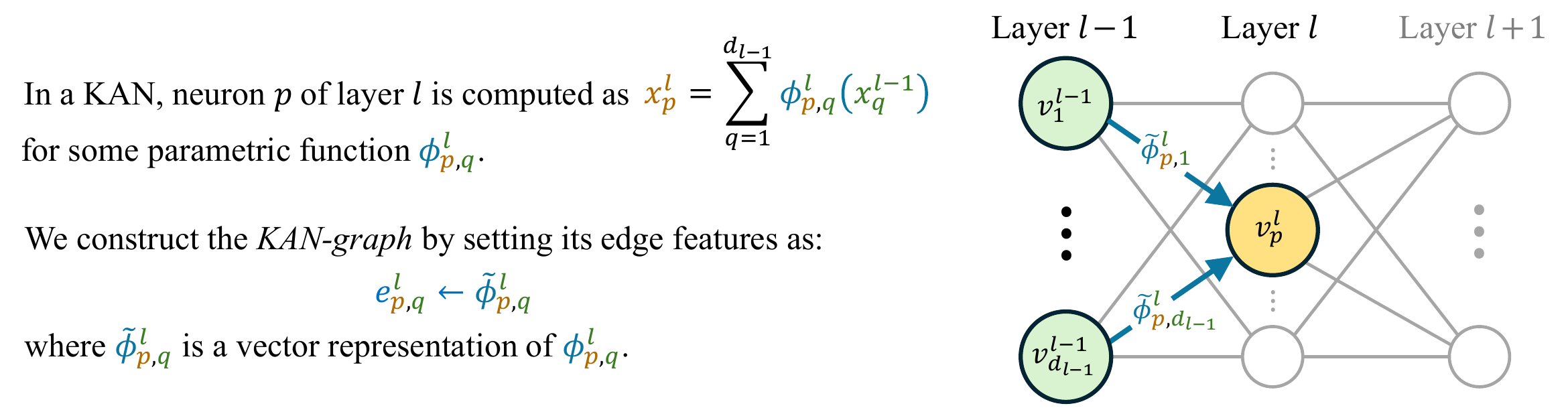}
    \caption{Constructing the KAN-graph for a given Kolmogorov-Arnold Network (KAN).} 
    \label{fig:kangraph}
    \vspace{-1em}
\end{figure}

\textbf{Our contributions.} We begin by showing that KANs also exhibit permutation symmetries -- in fact, the same as conventional neural networks. Building on this insight, we introduce the \emph{KAN-graph}, a novel attributed graph with edge features that compactly encodes the structure of a given KAN; see \Cref{fig:kangraph}.
On top of this representation, we develop \ourmethod, a GNN-based architecture capable of learning directly over the \emph{KAN-graph}. 
We show that \ourmethod applied to a KAN-graph can simulate the forward pass of the corresponding KAN, validating our approach from a theoretical perspective, and laying the ground for stronger results such as functional approximation theorems. 
To validate our approach experimentally, we construct the first ``model zoo'' of pre-trained KANs across diverse tasks, together with their corresponding KAN-graph representations serving as a benchmark. We show that \ourmethod consistently outperforms both generic baselines (such as MLPs over flattened parameters) and more sophisticated architectures, which effectively act as ablation studies and further validate our architectural design from the perspective of the symmetry analysis. 
Furthermore, because \ourmethod operates directly on graph representations through a GNN-based architecture, it can be seamlessly applied to KAN architectures of varying sizes, including those not encountered during training. We empirically demonstrate promising results in these settings.

\section{Background and related work}

\textbf{Equivariance and invariance in deep learning.} 
Many learning tasks involve functions that either remain unchanged (invariant) or transform in a predictable manner (equivariant) under specific symmetries of the input. A classic example is the translation invariance of Convolutional Neural Networks (CNN; \citealt{krizhevsky2012imagenet}), where shifting an image does not alter its label. While simple MLPs can, in principle, learn such symmetries from data, this approach is often inefficient. By explicitly incorporating the symmetry into the model architecture \citet{cohen2018spherical,maron2020learning,zaheer2017deep,ravanbakhsh2017equivariance}, the property becomes an inherent feature of the model rather than something that must be inferred during training. This typically leads to better generalization and data efficiency \citep{cohen2016group,brehmer2024does}. Building on these principles, a new class of models has recently emerged, referred to as \emph{weight-space models}.

\textbf{weight-space models.} The \emph{weight-space} of a neural network refers to the collection of parameters that fully define its architecture. For a multilayer perceptron (MLP), this is the set of weights and biases across all layers, $\theta = (\mathbf{W}_1, \mathbf{b}_1, \dots, \mathbf{W}_L, \mathbf{b}_L)$. 
\emph{weight-space models} are approaches that operate directly on this parameter space. While pioneering approaches proposed standard architectures for learning in WS \citep{schurholt2021self,schurholtmodel,schurholthyper}, more recent works \citep{navon2023equivariant,zhou2023permutation} have focused on the symmetries inherent in neural networks \citep{hecht1990algebraic,brea2019weight}, leading to tailored architectures that explicitly respect these symmetries. Several strategies have emerged: for instance,  \cite{navon2023equivariant,zhou2023permutation} enforce weight sharing in linear layers, while \emph{Graph Meta-Networks} \citep{kalogeropoulos2024scale,kofinas2024graph,lim2024graphmeta} leverage graph neural networks to operate directly on a model’s computational graph. In particular, \citet{lim2024graphmeta} has shown that those neural network symmetries correspond to graph automorphisms of their computational graphs.

\textbf{Kolmogorov–Arnold Networks (KANs).} KANs are a recently introduced class of neural networks in which edges of the computational graph, rather than carrying simple numerical weights, abstractly represent univariate functions \citep{liu2025kan}. 
Formally, given an input $\bmx \in \mathbb{R}^d$, an $L$-layer KAN defines a function $f$ as follows:
\begin{equation}
    f(\bmx) = \bmx^L, 
    \quad \text{where } 
    x^{l}_p = \sum_{q=1}^{d_{l-1} } \phi^{\,l}_{p,q}\!\bigl(x^{l-1}_q\bigr), 
    \quad \bmx^0 = \bmx,
\end{equation}
where each $\bm{\phi}^l$ is a matrix of univariate functions of size $d_l \times d_{l-1}$. 
That is, every entry is a function $\phi^l_{p,q} : \mathbb{R} \to \mathbb{R}$. Conveniently, and analogous to MLPs, the composition of such layers is defined:
\begin{equation}
    f(\bmx) = (\bmp^L \circ \cdots \circ \bmp^1 )\bmx,
\end{equation}
where the operator $\circ$ denotes the application of a layer to its input. To clarify the notation, for a given layer $l$, we define $\bigl(\phi^l(\bmx^{l-1})\bigr)_p \coloneqq \sum_{q=1}^{d_{l-1}} \phi^l_{p,q}(x^{l-1}_q)$. 

Several parameterizations of these 1D functions are possible \citep{bozorgasl2024wavkan,zhang2025kolmogorov}. In this work, consistent with the original KAN paper, we adopt a \emph{B-spline}-based parametrization \citep{schoenberg1946contributions}, denoted $B(x)$, which represents smooth piecewise polynomial functions over a domain. For a more detailed review of B-splines, see \Cref{app:B-splines}. Specifically, we define the 1D function $\psi(\cdot)$ composing KANs as follows,
\begin{align}
\psi(x) &= w_{b}\,b(x) + w_{s}\,B(x); &  
B(x) &= \langle \bm{c}, \bm{B}(x)\rangle = \sum_{i} c_i B_i(x),
\label{eq:phi_spline}
\end{align}
where $w_b, w_s$ are learnable parameters, $b(x) = \mathrm{silu}(x) = \tfrac{x}{1 + e^{-x}}$, and $B_i(x)$ are pre-defined B-spline basis functions with $c_i$ as the learnable coefficients.

\section{Learning on KAN parameter spaces}
\paragraph{Overview.} We begin by analyzing the structure of KANs' parameters and demonstrate that permuting hidden neurons does not alter the underlying function, mirroring the behavior observed in MLPs. Inspired by prior work on weight-space models for MLPs that introduced graph representations \citep{lim2023graphmetanetworksprocessingdiverse,kofinas2024graph}, our approach can be presented as follows: we represent the input KAN as a graph, where nodes correspond to individual neurons and edges represent the connections between them. The learned one-dimensional functions of the KAN are used to define the edge features (details follow). We refer to this construction as the \emph{KAN-graph}; see \Cref{fig:kangraph}. Importantly, the permutation symmetries—those that leave the underlying KAN function unchanged—correspond to permutations of hidden neurons within the \emph{KAN-graph}, which likewise leave the graph itself unchanged. Thus, we employ a graph neural network-based technique to process the \emph{KAN-graph}, leveraging their inherent equivariance to node permutations.

\pagebreak

In what follows, we (1) demonstrate that the permutation symmetries present in MLPs also hold in KANs; (2) present a method for converting KANs into \emph{KAN-graphs}; and (3) introduce \ourmethod\ a GNN-based architecture for processing \emph{KAN-graphs} and analyze its expressive power.



\subsection{Permutation symmetries in KANs}
\label{subsec:Permutation symmetries in KANs}
In a seminal work, \cite{hecht1990algebraic} observed that MLPs exhibit permutation symmetries: reordering the neurons within any hidden layer leaves the represented function unchanged. In this subsection, we make the observation that the same symmetry also holds for KANs. 

A KAN is fully specified by the collection of one-dimensional functions assigned to each input–output pair in every layer. For convenience, we denote the functions in an $L$-layer KAN by $[\bmp^l]_{l \in [L]}$ (where $[L] \coloneqq \{1, 2, \ldots, L \}$). Given permutation matrices $\bm{P}_1$ and $\bm{P}_2$, we define their action on a matrix of univariate functions $\bmp$ as, 
\begin{equation}
\label{eq:permutation action}
(\bm{P}_1 \bmp \bm{P}_2)_{p,q} = \bmp_{\sigma_1^{-1}(p), \sigma_2(q)},
\end{equation}
where $\sigma_1$ and $\sigma_2$ are the permutations associated with $\bm{P}_1$ and $\bm{P}_2$, respectively. Intuitively, this corresponds to reordering the rows and columns of $\bmp$ according to the given permutations. Below, we formally state the permutation symmetries of KANs.

\begin{restatable}[KAN symmetries]{proposition}{KANSymmetries}\label{prop:KAN_Equiv}
Let $\theta = (\bmp^L, \ldots, \bmp^{1})$ denote the collection of parametric one-dimensional functions composing an $L$-layer KAN. Consider the group, $G \coloneqq S_{d_1} \times S_{d_2} \times \cdots \times S_{d_{L-1}}$,
the direct product of symmetric groups corresponding to the intermediate dimensions $d_1, \dots, d_{L-1}$. Let $g = (\bm{P}_1, \dots, \bm{P}_{L-1}) \in G$, where each $\bm{P}_l$ is the permutation matrix of $\sigma_l \in S_{d_l}$. Define the group action $g \cdot \theta = \theta'$ with $\theta' = (\bmp'^L, \ldots, \bmp'^1)$ given by,
\begin{align*}
\bmp'^{1} = \bm{P}_1^\top \bmp^1\,,
\quad\quad
\bmp'^{l}=
\bm{P}_L^\top \bmp^l \bm{P}_{l-1},~\forall l=2,\dots, L-1
\,,
\quad\quad
\bmp'^{L} = \bmp^L \bm{P}_{L-1}
\,.
\end{align*}
Then, $f_\theta(\bmx) = f_{\theta'}(\bmx)$ for all $\bmx$.

\end{restatable}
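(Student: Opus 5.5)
The plan is to track how a permutation propagates through the layers. Note first that the statement writes $\bm{P}_L^\top$ in the middle layers; this should be $\bm{P}_l^\top$, since only $\bm{P}_1,\dots,\bm{P}_{L-1}$ exist. Throughout I use the natural convention that permutation matrices act on vectors by $(\bm{P}^\top\bmx)_p = x_{\sigma(p)}$ and $(\bm{P}\bmx)_p = x_{\sigma^{-1}(p)}$, and that the matrix-of-functions action of \Cref{eq:permutation action} is consistent with this. Concretely, $\bm{P}_l^\top\bmp^l\bm{P}_{l-1}$ should have $(p,q)$ entry $\phi^l_{\sigma_l(p),\sigma_{l-1}(q)}$. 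The first small step is to unpack \Cref{eq:permutation action} and confirm this indexing, using $\bm{P}^\top=\bm{P}^{-1}$, which corresponds to $\sigma^{-1}$.

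\textbf{Key single-layer identity.} For any layer $\bmp$ and permutations with matrices $\bm{P}_a,\bm{P}_b$ of compatible sizes, I will show
\[
(\bm{P}_a^\top\bmp\bm{P}_b)(\bm{P}_b^\top\bmx)=\bm{P}_a^\top\bigl(\bmp(\bmx)\bigr).
\]
To check it, compute the $p$-th coordinate of the left-hand side:
\[
\sum_q \phi_{\sigma_a(p),\sigma_b(q)}\bigl(x_{\sigma_b(q)}\bigr).
\]
Reindex the sum by $q'=\sigma_b(q)$, which is a bijection of the index set. This gives $\sum_{q'}\phi_{\sigma_a(p),q'}(x_{q'})=(\bmp(\bmx))_{\sigma_a(p)}$, which is exactly the $p$-th coordinate of $\bm{P}_a^\top\bmp(\bmx)$. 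The identity also covers the boundary layers if we take the omitted $\bm{P}_0$ and $\bm{P}_L$ to be identities.

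\textbf{Induction on layers.} Let $\bmx'^l$ denote the activations of $f_{\theta'}$. I claim $\bmx'^l=\bm{P}_l^\top\bmx^l$ for $1\le l\le L-1$.
\begin{itemize}
\item Base case: $\bmx'^1=\bm{P}_1^\top\bmp^1(\bmx)=\bm{P}_1^\top\bmx^1$ by the identity with $\bm{P}_b=I$.
\item Inductive step: for middle layers, $\bmx'^l=(\bm{P}_l^\top\bmp^l\bm{P}_{l-1})(\bm{P}_{l-1}^\top\bmx^{l-1})=\bm{P}_l^\top\bmx^l$.
\item Final layer: $\bmx'^L=(\bmp^L\bm{P}_{L-1})(\bm{P}_{L-1}^\top\bmx^{L-1})=\bmx^L$.
\end{itemize}
Hence $f_{\theta'}(\bmx)=f_\theta(\bmx)$.

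\textbf{Main obstacle.} The only real difficulty is bookkeeping: making sure the index convention of \Cref{eq:permutation action} (which uses $\sigma_1^{-1}$ for rows and $\sigma_2$ for columns) lines up with how $\bm{P}^\top$ acts on vectors. If the conventions turn out transposed, I will redefine the induction hypothesis as $\bmx'^l=\bm{P}_l\bmx^l$ or similar. The argument is unchanged, because in either case the reindexing is a bijection.
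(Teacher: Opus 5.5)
Your proposal is correct and uses the same mechanism as the paper's proof: reindex the sum over a hidden layer by the bijection $\sigma$, using the entrywise identities $(\bmp\bm{P})_{p,q}=\phi_{p,\sigma(q)}$ and $(\bm{P}^\top\bmp)_{q,k}=\phi_{\sigma(q),k}$. These identities agree with your unpacking of the permutation action, and you are right that $\bm{P}_L^\top$ in the middle layers should read $\bm{P}_l^\top$. The paper writes out only the single-hidden-layer case and says general $L$ follows naturally, so your induction $\bmx'^l=\bm{P}_l^\top\bmx^l$ supplies that extension; it also correctly keeps $\sum_k\phi^1_{q,k}(x_k)$ inside $\phi^2_{p,q}$, whereas the paper's displayed computation pulls $\sum_k$ outside the nonlinear $\phi^2_{p,q}$.
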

A short proof is given in \Cref{app:Proofs} for the case of a single hidden-layer KAN, and the extension to multiple hidden layers follows naturally.

\begin{wrapfigure}[9]{r}{0.6\textwidth}
\centering
\vspace{-0.55cm}
  \includegraphics[width=0.6\textwidth]{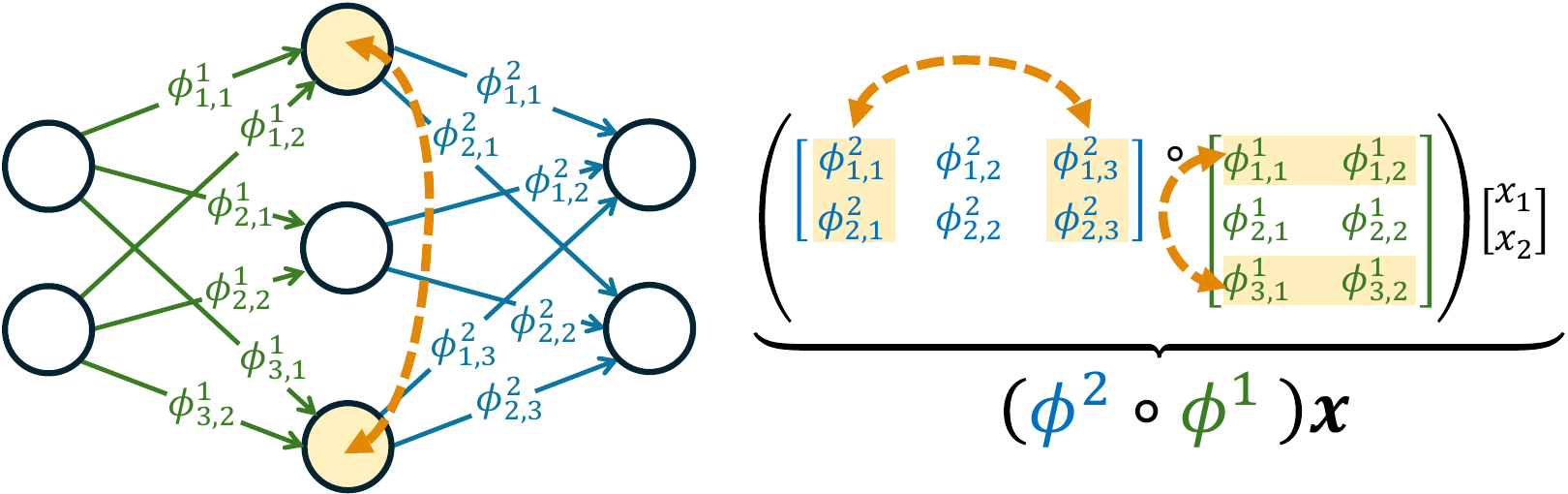}
  \caption{Hidden neuron permutation symmetries in KANs.}
  \label{fig:kan_abstract_graph_permutations}
\end{wrapfigure}
Intuitively, such permutations correspond to reordering nodes within hidden layers.
As an example, \Cref{fig:kan_abstract_graph_permutations} illustrates a single–hidden-layer KAN, where the permutation matrix $\bm{P}$ corresponds to $(1,3)$.
In other words, the first and third nodes are mapped to one another, while the second remains unchanged. Importantly, these permutation symmetries hold independently of the chosen parametrization of the 1D functions. 

In the next subsection, we introduce the \emph{KAN-graph}, a graph representation of KANs that (i) compactly encodes their structure, and (ii) remains invariant under neuron permutations that do not alter the function computed by the KAN.

\subsection{KAN-graph}
\label{subsec:KAN-graph}
The main idea behind the definition of the KAN-graph is relatively natural: as illustrated in \Cref{fig:kangraph}, nodes represent the KAN's neurons and edges should represent the univariate functions. In this subsection, we formalize the construction by explicitly defining the nodes, edges, and edge features derived from those univariate functions.

\begin{wrapfigure}[5]{r}{0.25\textwidth}
\centering
\vspace{-0.7cm}
  \includegraphics[width=0.25\textwidth]{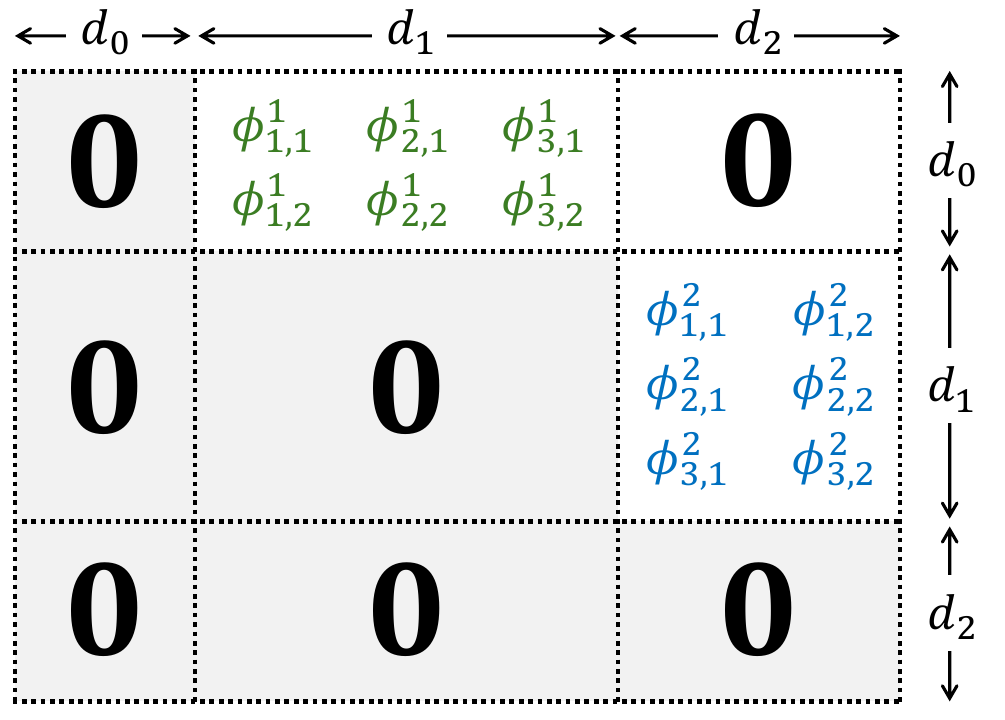}
\end{wrapfigure}
We define the \emph{KAN-graph} as a directed graph $G = (V , E)$ where $V$ represents the set of neurons, $E$ represents the set of edges, and  $\bm{v} \in \mathbb{R}^{n \times d_V}, \bm{e} \in \mathbb{R}^{m \times d_E}$ are their corresponding features, respectively. Here, $n,m$ denote the total number of nodes and edges in the graph, respectively. To clarify, in \Cref{fig:kan_abstract_graph_permutations}(left), we have $n=7$ nodes, $m=12$ edges, and the corresponding adjacency matrix is visualized inset.\footnote{The adjacency matrix of a given KAN-graph is extremely sparse: nonzeros appear only in the first superdiagonal blocks, specifically between blocks~$l$~and~$l+1$ for each $l \in [L]$.}

\textbf{Univariate functions as edge features in the \emph{KAN-graph}.} 
To fully define the \emph{KAN-graph}, we must specify how to incorporate the explicit parametrization of $[\bmp^l]_{l \in [L]}$ characterizing the KAN into the KAN-graph. 
In this work, we focus on the parametrization introduced in the original KAN paper \citep{liu2025kan}, wherein the learnable univariate functions are based on B-splines; as per \Cref{eq:phi_spline}.

In this case, the learnable parameters composing those univariate functions can be conveniently `collected' into a vector
$\tilde{\bmp}^l_{p,q} \coloneqq [w_{b;p,q}^l,\; w_{s;p,q}^l,\; \bm{c}^l_{p,q}]$. Thus, we define, for a layer $l$, an input node (or neuron) $p$, and an output node $q$, the following edge feature,
\begin{equation}
    \label{eq:edge-feat}
    \bm{e}^l_{p,q} = \tilde{\bmp}^l_{p,q} \coloneqq [w_{b;p,q}^l,\; w_{s;p,q}^l,\; \bm{c}^l_{p,q}],
\end{equation}
which elegantly collects the learnable parameters of the one-dimensional function $\bmp^l_{p,q}$. 

\subsection{Learning on the \emph{KAN-graph}}
\label{subsec:Learning on the KAN-graph}
To learn on the \emph{KAN-graph}, we adopt a general message-passing framework motivated by \cite{gilmer2017neural}, as follows
(the letters a--d denote the order of execution),
\begin{align*}
&a)\;\bm{v}^{\text{F}}_i 
\!\leftarrow\! 
\texttt{MLP}_v^{(2;\,\text{F})} \Big(\bm{v}_i,\!\!\!\!\!\!\! 
    \sum_{j:\,\bm{e}(i,j)\in E} \!\!\!\!\!\!\!\texttt{MLP}_v^{(1;\,\text{F})}(\bm{v}_j,\bm{e}_{(i,j)})\Big);
    \,
b)\;\bm{v}^{\text{B}}_i 
\!\leftarrow\!
\texttt{MLP}_v^{(2;\,\text{B})}\!\Big(\bm{v}_i,\!\!\!\!\!\!\!\! 
    \sum_{j:\,\bm{e}(i,j)\in E^T} \!\!\!\!\!\!\!\!\!\texttt{MLP}_v^{(1;\,\text{B})}(\bm{v}_j,\bm{e}_{(i,j)})\Big);
    \\
&
c)\;\bm{e}_{(i,j)}
\leftarrow \texttt{MLP}_e(\bm{v}_i,\bm{v}_j,\bm{e}_{(i,j)}); 
\hspace{2.4cm}
d)
\;\bm{v}_i \leftarrow \texttt{MLP}_v^{(3)}(\bm{v}_i,\bm{v}^{\text{F}}_i,\bm{v}^{\text{B}}_i)\,.
\end{align*}

Intuitively, node features are updated by aggregating information from both their outgoing and incoming neighbors. Edge features, in turn, are refined based on the states of their endpoints as well as their own current representation. Each node’s representation is then updated by combining its intrinsic features with the forward- and backward-aggregated information. We note that although the computational graph of KANs is inherently directed, with edges pointing from one layer to the next, we explicitly perform bidirectional message passing -- propagating information not only forward but also in reverse -- as this dual flow was found to enhance performance.

\begin{wrapfigure}[9]{r}{0.28\textwidth}
\centering
\vspace{-1em}
\includegraphics[width=0.95\linewidth]{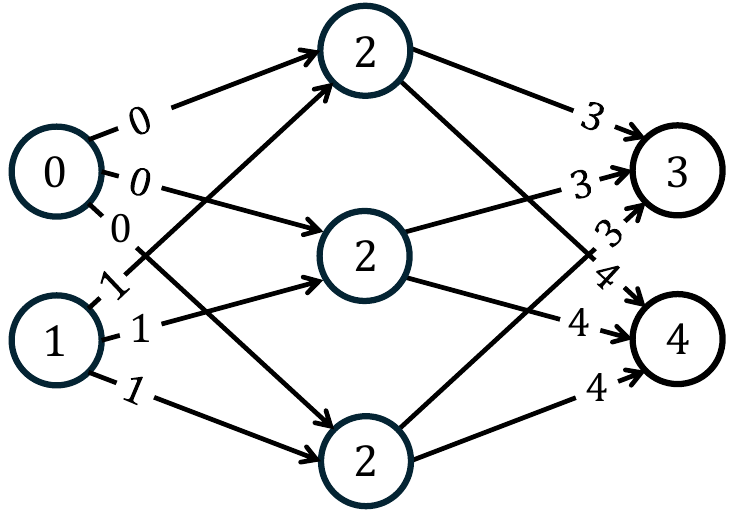}
\caption{PE.}
\label{fig:KAN_PE}
\end{wrapfigure}
\textbf{Positional encodings (PE) as additional edge and node features.} We follow prior work \citep{kofinas2024graph,lim2024graphmeta} and augment each node and edge of the KAN-graph with positional embeddings that indicate their position in the computation flow. This breaks potential artificial symmetries that may arise in the KAN-graph. Specifically, all nodes within the same intermediate layer share a common positional embedding. By contrast, input and output nodes are assigned distinct embeddings, since permutations of these nodes generally alter the network's function. For edges, we assign a unique identifier to each one, associated with its input and output nodes.

\Cref{fig:KAN_PE} depicts a possible assignment of positional encoding (via simple integers) to both nodes and edges for the running example of \Cref{fig:kan_abstract_graph_permutations}(left). 
Intuitively, permuting neurons within a hidden layer leaves the feature-augmented graph (i.e., its adjacency matrix) unchanged. 
In contrast, permuting a node from the first layer with one from the last \emph{does} alter the graph.

\section{Expressive power of \ourmethod}
While there are multiple ways to design a WS architecture for processing KANs, we adopted \emph{one} specific approach, which stems from equivariance principles, and models the KAN as a graph. However, a question arises: \emph{is this the right design choice?}  

In general, imposing group-equivariance constraints might reduce a model’s expressive power \citep{maron2019universality,xu2018powerful,morris2019weisfeiler}. 
In the weight-space literature, this expressive power is often analyzed by demonstrating that the architecture can simulate (i.e., approximate) the forward pass of a given input model. 
For example, \cite{lim2024graphmeta,navon2023equivariant} establish such results in the context of processing standard neural networks. 
Such approximation results typically serve as an intermediate step toward proving stronger results, such as functional approximation theorems \citep{navon2023equivariant}.

In this section, we show that \ourmethod\ can simulate a forward pass for a given input KAN. We begin by showing that there exists a set of weights for a single-hidden-layer MLP that can approximate any univariate function based on B-splines (under mild assumptions), as per \Cref{eq:phi_spline}, to arbitrary precision. We then use this result to prove our main result: \ourmethod\ can simulate the forward pass of the original KAN. All proofs are provided in \Cref{app:Proofs}.

\begin{restatable}[MLP as an approximation of the univariate functions composing the KAN]{lemma}{SplineApprox}\label{lemma:SplineApprox}
Let $\mathcal{B}$ denote the family of cardinal B-splines of degree $k$, defined on a fixed grid $G$ over the domain $[a,b]$, and parameterized by coefficients $\bm{c} = [c_0, \ldots, c_{G+k-1}] \in \mathcal{C} \subset \mathbb{R}^{G+k}$, where the set of admissible coefficients lies in a compact domain $\mathcal{C}$. Consider the function $\psi: \mathbb{R} \to \mathbb{R}$ defined as $\psi(x) = w_b\, b(x) + w_s\, B(x)$, where $w_b, w_s \in \mathcal{W} \subset \mathbb{R}$ for some compact set $\mathcal{W}$. $b(x)$ denotes the silo function, and $B(x)$ denotes the B-spline. Then, for any $\varepsilon > 0$, and for any compact domain $\mathcal{X}$ there exists a set of weights for a multilayer perceptron
$\texttt{MLP}: \mathbb{R}^{G+k+3} \to \mathbb{R}$
such that,
\[
{\sup}_{x \in \mathcal{X}} \big| \texttt{MLP}(x, w_s, w_b, \bm{c}) - \psi(x) \big| < \varepsilon.
\]
\end{restatable}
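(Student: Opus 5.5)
The plan is to view $\psi$ as a single continuous function of all its arguments and then apply the classical universal approximation theorem on a compact set. Define $F:\mathbb{R}^{G+k+3}\to\mathbb{R}$ by
\[
F(x, w_s, w_b, \bm{c}) \coloneqq w_b\, b(x) + w_s \sum_{i=0}^{G+k-1} c_i B_i(x).
\]
Then $\psi(x) = F(x,w_s,w_b,\bm{c})$ for each fixed parameter tuple. The domain of interest is $K \coloneqq \mathcal{X}\times\mathcal{W}\times\mathcal{W}\times\mathcal{C}$, which is a finite product of compact sets and hence compact in $\mathbb{R}^{G+k+3}$ by Tychonoff (or simply Heine--Borel).

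The first step is to show that $F$ is continuous on $K$. The silu function $b(x)=x/(1+e^{-x})$ is smooth. Each cardinal B-spline basis function $B_i$ of degree $k$ on the fixed grid is continuous: for $k\ge 1$ it is piecewise polynomial with $C^{k-1}$ joins. If $\mathcal{X}$ extends beyond $[a,b]$, I use the standard convention that $B_i$ is extended by zero (or by the usual extrapolation), which is still continuous for $k\ge1$. The map $(w_s,\bm{c})\mapsto w_s c_i$ is a polynomial, so $F$ is a finite sum of products of continuous functions and is therefore continuous, indeed jointly continuous, on $\mathbb{R}^{G+k+3}$.

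The second step invokes the universal approximation theorem (Cybenko/Hornik, or Leshno et al.\ for any non-polynomial continuous activation such as ReLU or silu). It gives a one-hidden-layer $\texttt{MLP}:\mathbb{R}^{G+k+3}\to\mathbb{R}$ with $\sup_{z\in K}|\texttt{MLP}(z)-F(z)|<\varepsilon$. Taking the supremum over $x\in\mathcal{X}$ for any fixed admissible $(w_s,w_b,\bm{c})$ yields the claimed bound. The bound is in fact uniform over the parameters as well, which is what is needed later to simulate a KAN forward pass edge by edge.

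I expect the only delicate point to be the continuity of the B-spline part. The degree-$0$ case is discontinuous (indicator functions of grid intervals), and so are the boundary conventions outside $[a,b]$. I would handle this by assuming, as the ``mild assumptions'' allow, that $k\ge 1$ (the standard KAN choice being $k=3$), with zero extension outside the support, which is continuous at the knot endpoints. For $k=0$ one would instead need an approximation in a weaker sense, or a restriction of $\mathcal{X}$ to a closed subinterval avoiding the knots. I would state this assumption explicitly rather than attempt it.
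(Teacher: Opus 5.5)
Your proposal is correct and takes the same route as the paper: you treat $(x,w_s,w_b,\bm{c})\mapsto w_b\, b(x)+w_s\sum_i c_iB_i(x)$ as a continuous function on the compact product $\mathcal{X}\times\mathcal{W}\times\mathcal{W}\times\mathcal{C}$ and apply the universal approximation theorem. Two points in your version go beyond the paper's proof, which only asserts that $\psi$ is continuous: you state explicitly that the continuity needed is joint continuity in the parameters, and that this requires $k\ge 1$, since degree-$0$ splines are discontinuous.
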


\Cref{lemma:SplineApprox} essentially states that, under the assumptions above, there exists an $\texttt{MLP}$ that effectively computes the univariate functions composing our input KANs, over any chosen compact domain $\mathcal{X}$.
Consequently, we show that under mild conditions, \ourmethod\ can approximate a forward pass of the input KAN model.

\begin{restatable}[\ourmethod\ can simulate the forward pass of KANs]{proposition}{KANApprox}\label{prop:KANApprox}
Let $f_\theta$ be a given KAN architecture, defined over an input domain $[a,b]^n$, where each univariate function is represented by a B-spline from the family $\mathcal{B}$. Let $G$ be its \emph{KAN-graph}, where the nodes in the first layer are enhanced with the input $\bmx \in [a,b]^n$. For every $\varepsilon > 0$, there exists a \ourmethod\ such that,
\[
{\sup}_{\bm{x} \in [a,b]^n} \big| \text{WS-KAN}(G) - f_\theta(\bm{x}) \big| < \varepsilon.
\]
\end{restatable}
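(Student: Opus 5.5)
The plan is to make $L$ rounds of the message-passing scheme of \Cref{subsec:Learning on the KAN-graph} reproduce the $L$ layers of the KAN, one layer per round. Throughout, the node features of layer-$l$ neurons will carry an approximation $\hat x^l_p$ of the true activation $x^l_p$.

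\textbf{Round $l$.} In round $l$ only forward aggregation into layer-$l$ nodes does real work. The backward branch $\texttt{MLP}_v^{(\cdot;\text{B})}$ and the edge update $\texttt{MLP}_e$ are set to (approximate) the identity or zero. They are easily made exact: ReLU MLPs can realize identity on bounded inputs via $z=\mathrm{relu}(z)-\mathrm{relu}(-z)$.

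For a layer-$l$ node $p$ and incoming neighbor $q$ in layer $l-1$, the message $\texttt{MLP}_v^{(1;\text{F})}(\bm v_q,\bm e_{(p,q)})$ should approximate $\phi^l_{p,q}(\hat x^{l-1}_q)$. The edge feature is $\tilde{\bmp}^l_{p,q}=[w_b,w_s,\bm c]$ as in \Cref{eq:edge-feat}, so this is exactly the approximation task of \Cref{lemma:SplineApprox}.

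The positional encodings let a single MLP act only on the relevant layer. The node PE identifies the layer, and a gating coordinate set to zero elsewhere suppresses messages in other layers. Alternatively, since messages into layer-$l$ nodes only come from layer $l-1$, I can store $\hat x$ in a dedicated coordinate per layer. The sum aggregation then gives $\sum_q \phi^l_{p,q}(\hat x^{l-1}_q)$. Finally $\texttt{MLP}_v^{(2;\text{F})}$ and $\texttt{MLP}_v^{(3)}$ copy this value into the designated coordinate of $\bm v_p$, leaving other coordinates untouched.

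\textbf{Error propagation.} I argue by induction on $l$ with a compact domain. All parameters of the fixed KAN lie in the compact sets $\mathcal C,\mathcal W$. Each $\phi^l_{p,q}$ is continuous, so the true activations $x^l$ range over a compact set $K_l$ as $\bmx$ ranges over $[a,b]^n$.

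I apply \Cref{lemma:SplineApprox} on the enlarged compact domain $\mathcal X_l=K_l+[-1,1]$, which tolerates small errors in the inputs. Each $\phi^l_{p,q}$ is uniformly continuous on $\mathcal X_l$, so it admits a modulus of continuity $\omega_l$. This gives
\[
|\hat x^l_p-x^l_p|\le d_{l-1}\big(\varepsilon_l+\omega_l(\delta_{l-1})\big),
\]
where $\delta_{l-1}$ bounds the layer-$(l-1)$ error and $\varepsilon_l$ is the accuracy of the layer-$l$ message MLP. Choosing the accuracies backwards, from $\varepsilon_L$ down to $\varepsilon_1$, makes the final error below $\varepsilon$ and keeps every $\delta_l\le 1$, so all inputs remain inside the domains $\mathcal X_l$. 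A readout then outputs the layer-$L$ node coordinates, which is $\text{WS-KAN}(G)$.

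\textbf{Main obstacle.} The delicate step is sharing one message MLP across all layers while the lemma is stated for a single compact domain. I plan to apply \Cref{lemma:SplineApprox} once on $\mathcal X=\bigcup_l\mathcal X_l$, which is still compact. Alternatively, I can use distinct weights per round, since the $L$ message-passing rounds need not share weights. With per-round weights the layer-gating issue also disappears, and the remaining obstacle reduces to the induction above.
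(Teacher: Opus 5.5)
Your proposal is correct and is essentially the paper's proof. The shared structure is an induction over layers: the forward message MLP is the approximator of \Cref{lemma:SplineApprox} applied to $(\bm v_q,\bm e_{(p,q)})$, sum aggregation yields $\sum_q \phi^l_{p,q}(\hat x^{l-1}_q)$, the update MLPs copy this value, and uniform continuity on a $1$-enlarged compact domain (with all errors kept below $1$) controls error propagation. Your bookkeeping is in fact a bit cleaner than the paper's: you keep $\texttt{MLP}_e$ as the identity so the spline parameters survive to later rounds, whereas the paper sets it to zero and thereby erases them after the first round, and you carry the factor $d_{l-1}$ from summing the per-edge errors explicitly.
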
 
Importantly, \Cref{prop:KANApprox} holds for any choice of parameters defining the KANs’ univariate functions (\Cref{eq:phi_spline}). Below we present the proof idea, while the full proof can be found in \Cref{app:Proofs}.

\begin{proof}[Proof idea]
A KAN computation can be viewed as a composition of $L$ continuous functions (layers). 
Using \Cref{lemma:SplineApprox}, we show that \ourmethod\ can approximate each individual layer to arbitrary precision. Then, by applying standard techniques (e.g., ideas from Lemma~6 of \citealt{lim2022sign}) and the message passing mechanism, we construct approximations layer by layer, ensuring that the overall WS-KAN output remains arbitrarily close to that of the original KAN.
\end{proof}

\section{Experiments}
While various ``model zoos'' \citep[e.g.,][]{schürholt2022modelzoosdatasetdiverse} exist for benchmarking weight-space models in conventional neural networks, no comparable resource has yet been developed for KANs. Nor are there established baselines against which \ourmethod\ can be evaluated. To address this gap, we take inspiration from tasks and baselines explored in the weight-space literature for conventional neural networks and construct several families of model zoos of trained KANs. These zoos are designed to capture both invariant and equivariant tasks and are built from five datasets: \Mst\ \citep{lecun1998mnist}, Fashion-MNIST (\FMst; \citealt{xiao2017fashion}), Kuzushiji-MNIST (\KMst; \citealt{clanuwat2018kmnsit}), \Cif\ \citep{krizhevsky2009cifar}, and a synthetic dataset that we designed inspired by the one in \citet{navon2023equivariant}.

We focus on two invariant problems---INR classification (\Cref{subsec:INR classification}) and accuracy prediction (\Cref{subsec:Accuracy Prediction})---and one equivariant task---pruning, where the goal is to predict a pruning mask directly from the KAN's weights (\Cref{subsec:Pruning}). We also evaluate \ourmethod's ability to generalize to KAN architectures unseen during training (\Cref{app:Out-of-distribution generalization to wider KANs}). For each model in the zoo, we also construct the corresponding KAN-graph. We benchmark \ourmethod\ against the following baselines.

\textbf{\emph{Standard baselines:}} 
(1) \textbf{MLP}: A simple multilayer perceptron applied to a vectorized representation of the KAN's parameters.
(2) \textbf{MLP + Aug.}: The same MLP as above, but trained with permutation augmentation, i.e., randomly permuting the input KAN in ways that preserve its underlying function. 
(3) \textbf{MLP + Align.}: Inspired by alignment techniques for MLPs \citep{ainsworth2022git}, where parameters are reordered to maximize model similarity, we extend this idea to KANs. The main challenge is that KAN parameters are functions rather than scalars, and therefore alignment requires defining distances between functions. Full details are in \Cref{app:Aligning Kolmogorov-Arnold Networks}. We consider as a baseline an MLP applied to aligned KANs.
(4) \textbf{DMC} \citep{eilertsen2020classifying}: a convolutional layer applied to the (vectorized) model parameters. \textbf{\emph{Ablation baselines:}} We introduce two additional baselines to ablate our architectural design choice.
(5) \textbf{DS (DeepSets)}: A DeepSets \citep{zaheer2017deep} architecture applied to the graph's edge features. 
Importantly, while this baseline is invariant to KAN permutation symmetries, it is also invariant to many more permutations that do not correspond to these symmetries, and completely neglects the graph topology.
(6) \textbf{SetTrans}: Similar to (5), but employing a transformer architecture \citep{vaswani2017attention} over the set of edges. 
We note that this baseline is only feasible for relatively small input KANs, since the attention matrix scales quadratically with the number of neurons.\footnote{In practice, we found this approach computationally expensive---training a single epoch could take up to one hour in some experiments---making it significantly slower than \ourmethod\ and other baselines.} Thus, \textbf{SetTrans} is reported only when feasible. Additionally, we ablate key design choices in \ourmethod, including positional encoding and bidirectional message passing; see \Cref{app:Ablation studies} for details.

In the following sections, we provide additional details on the tasks considered, describe the construction of the KAN training datasets, and present our results. For each experiment, we report mean performance over three seeds, with error bars for standard deviation. All test results were obtained by optimizing for validation performance. Additional experimental details, including the hyperparameter grid, implementation notes, and extended results, are available in \Cref{app:Extended Experimental Section}\footnote{Our code, including model zoo construction, is available at \href{https://github.com/BarSGuy/KAN-Graph-Metanetwork}{https://github.com/BarSGuy/KAN-Graph-Metanetwork}.}.

\subsection{INR classification} 
\label{subsec:INR classification}
First, we evaluate \ourmethod\ on INR (\citealt{mescheder2019occupancy,sitzmann2020implicit}) classification.

\begin{wrapfigure}[6]{r}{0.45\textwidth}
    \centering
    \vspace{-0.7cm}
    \includegraphics[width=0.45\textwidth]{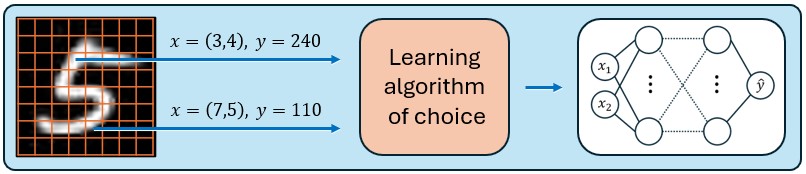}\\[0.5ex]
    \begin{subfigure}{0.145\textwidth}
        \centering
        \includegraphics[width=\linewidth]{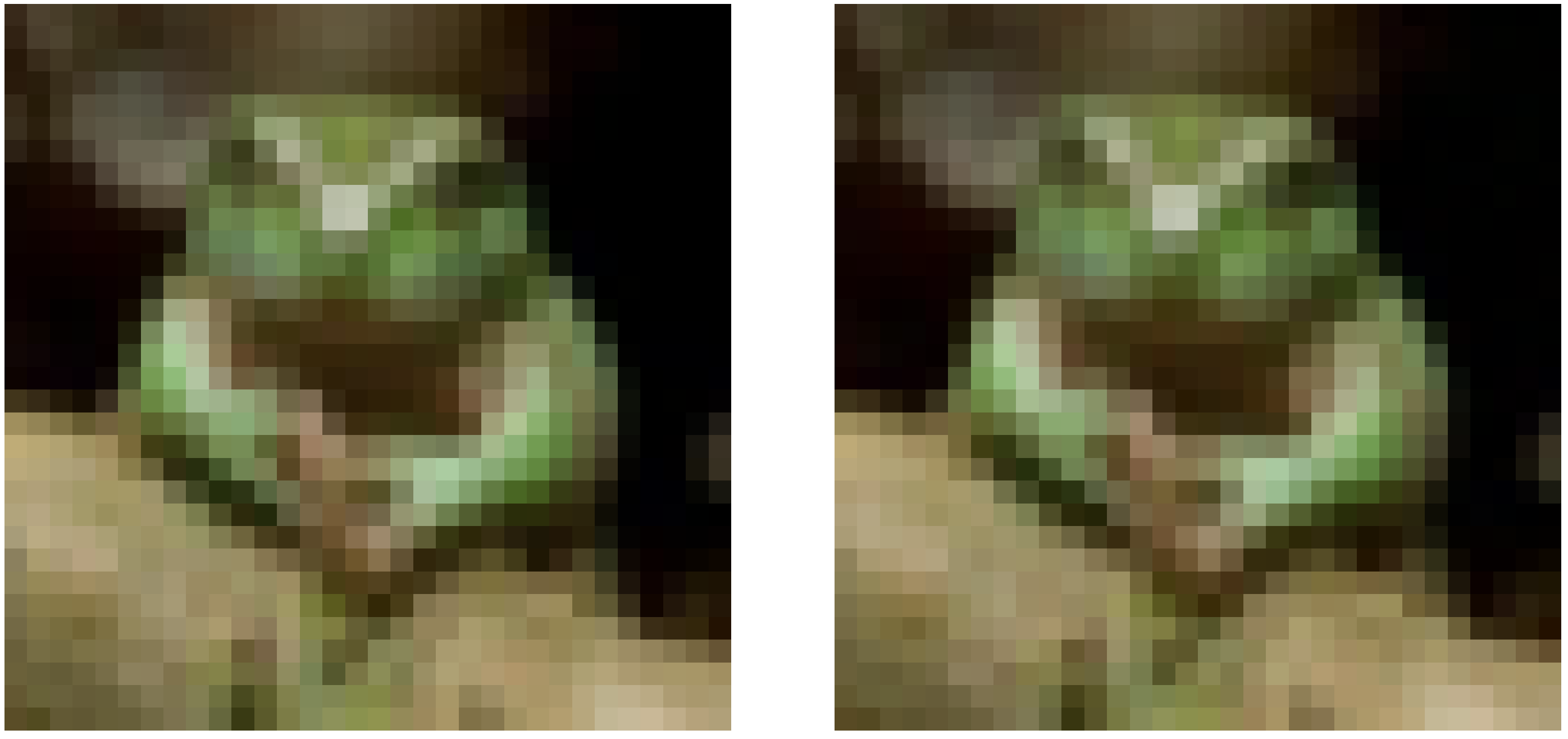}
    \end{subfigure}\hfill
    \begin{subfigure}{0.145\textwidth}
        \centering
        \includegraphics[width=\linewidth]{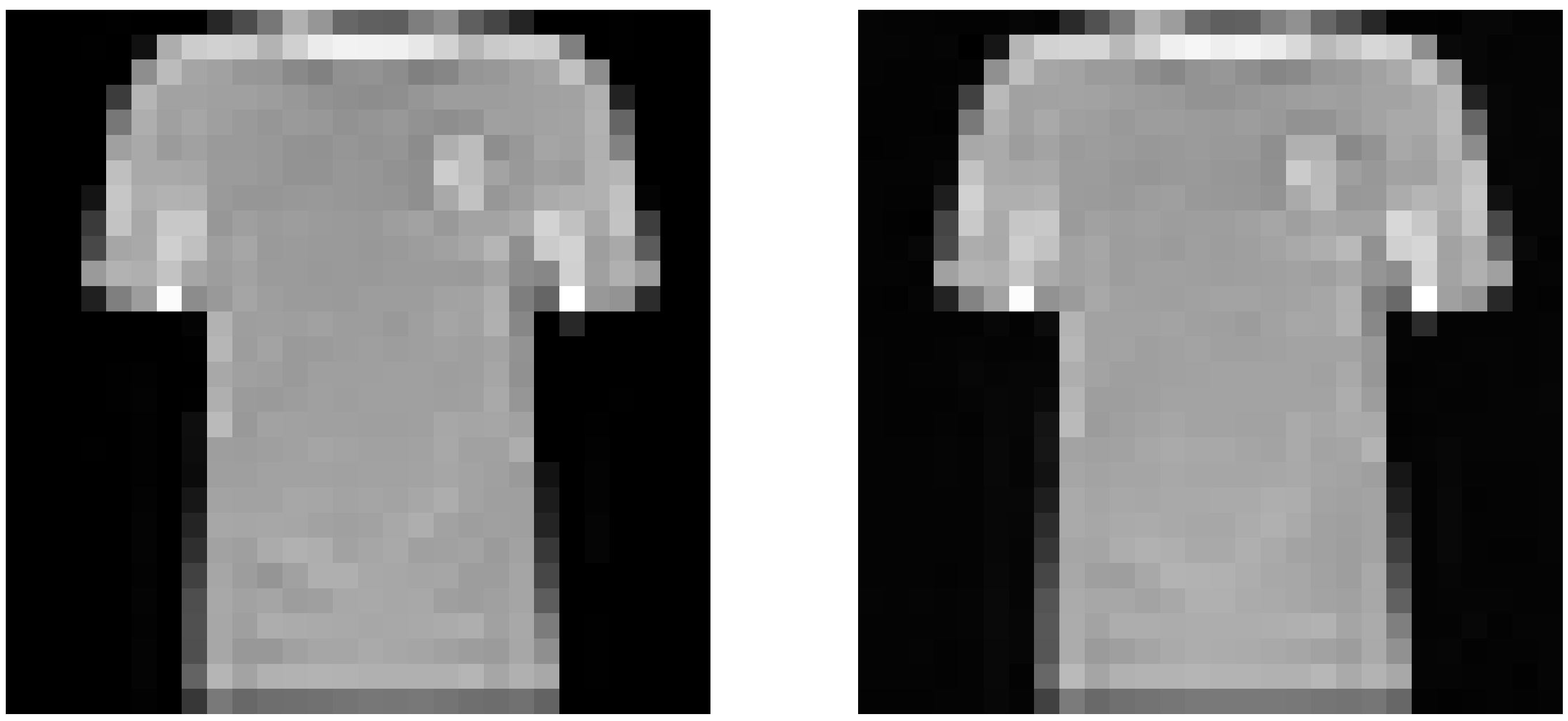}
    \end{subfigure}\hfill
    \begin{subfigure}{0.145\textwidth}
        \centering
        \includegraphics[width=\linewidth]{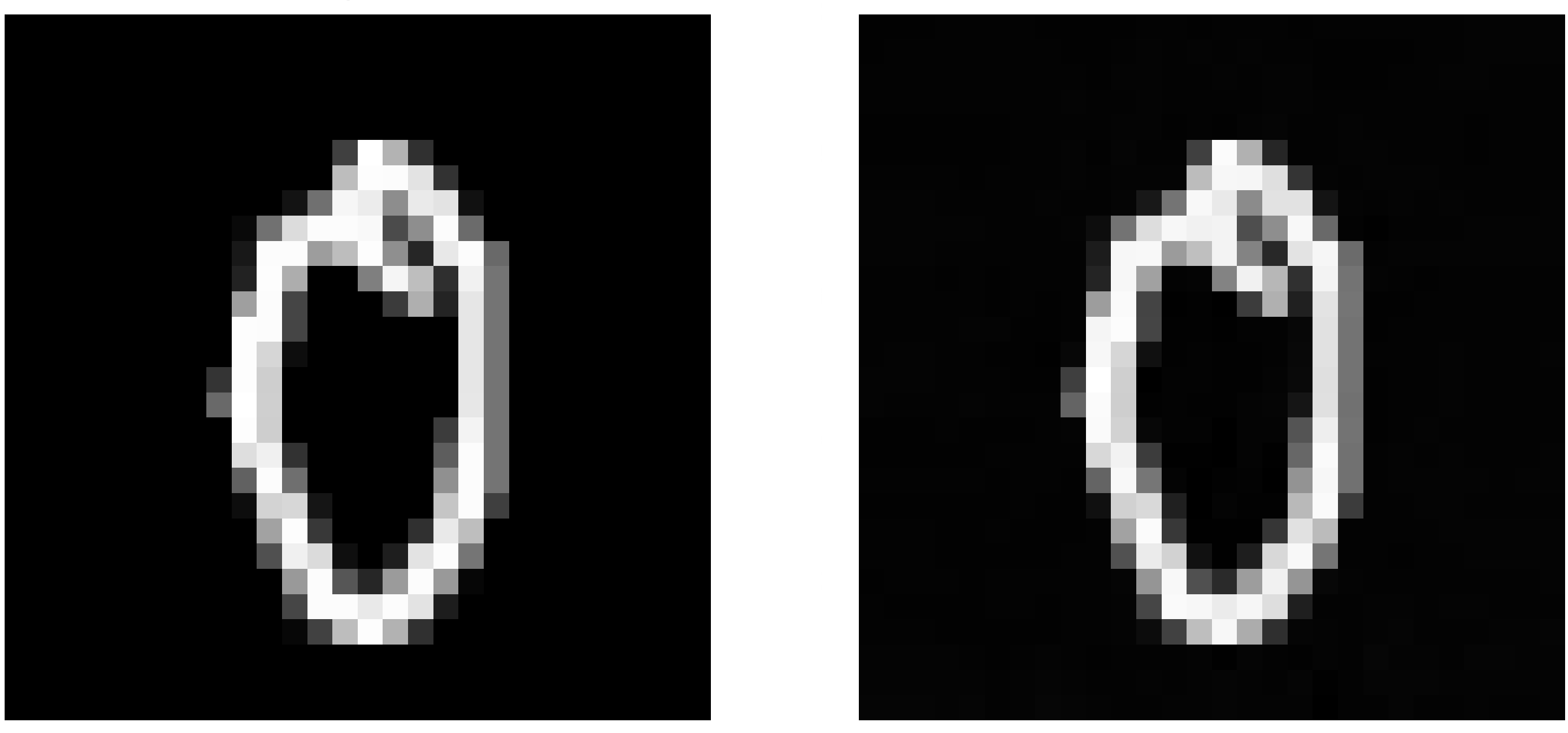}
    \end{subfigure}
    \label{fig:KAN-based-INRS}
\end{wrapfigure}

\textbf{What is an INR?}
For a given image, an INR learns a mapping from any input coordinate to the corresponding grayscale (or RGB) value of that coordinate in the image. See inset (top). At the bottom inset, we show example reconstructions produced by KAN-based INRs over \Cif, \FMst, \Mst\ -- left corresponds to ground truth image, and right corresponds to the reconstructed one.

\begin{wrapfigure}[5]{r}{0.45\textwidth}
\centering
\vspace{-0.55cm}
\includegraphics[width=\linewidth]{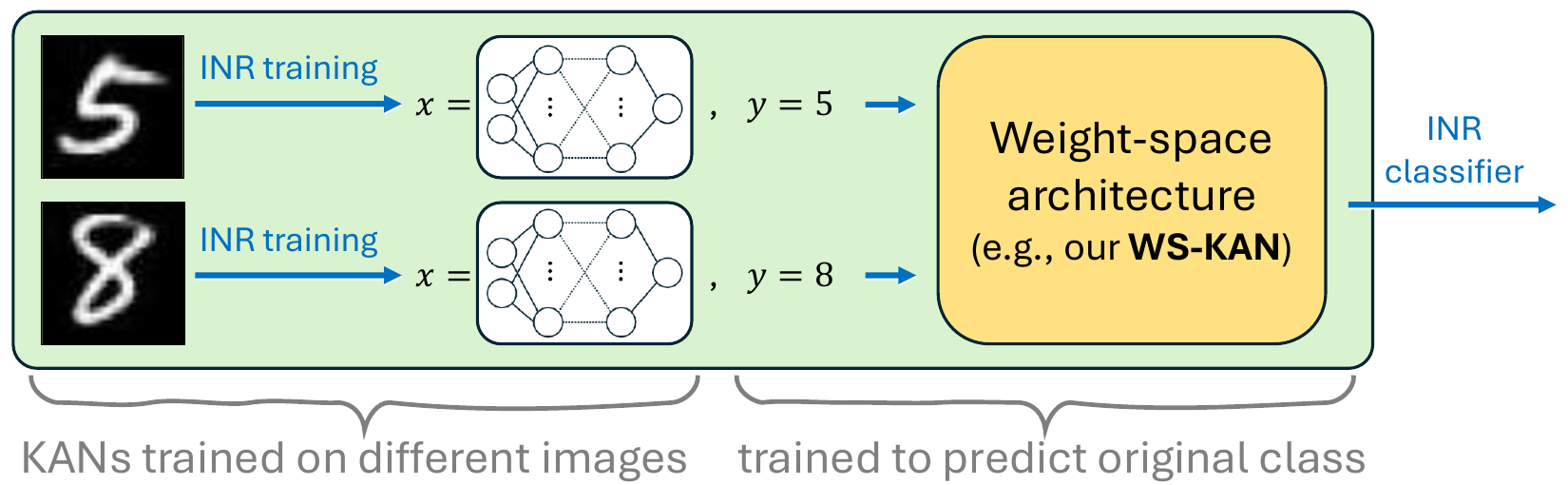}
\end{wrapfigure}
\textbf{Setup and dataset construction.} The setup is illustrated inset. For this task, we convert the following datasets to KAN-based INRs: Sine waves (a synthetic dataset), \Mst, \FMst, and \Cif. 
Taking \Mst\ as an example, the key idea is that for \emph{each} image in the dataset, we train an \emph{independent} KAN-based INR to `reconstruct' it. 
Once this zoo of INRs is constructed, we train the weight-space model under study (e.g., \ourmethod) to classify the digit, using as input the parameters of the KAN-based INR (or the \emph{KAN-graph} when \ourmethod\ is tested) rather than the raw pixel data. See \Cref{app:INR classification: Extended section} for dataset split and details.

\begin{wraptable}[11]{r}{0.5\textwidth}
\footnotesize
\centering
\vspace{-0.4cm}
\caption{\textbf{INR classification accuracy.}}
\label{tab:INR table}
\vspace{-4pt}
{%
\setlength{\tabcolsep}{2pt} 
\begin{tabular}{lccc}
\toprule
\textbf{Method} & \textbf{\Mst} & \textbf{\FMst} & \textbf{CIFAR-10} \\
\midrule
~MLP   & 34.1{\scriptsize$\pm$0.1} & 41.3{\scriptsize$\pm$0.3} & 16.8{\scriptsize$\pm$0.1} \\
~MLP + Aug.       & 62.7{\scriptsize$\pm$1.1} & 63.0{\scriptsize$\pm$0.3} & 28.2{\scriptsize$\pm$0.7} \\
~MLP + Align.   & 81.0{\scriptsize$\pm$0.1} & 73.6{\scriptsize$\pm$0.2} & 30.0{\scriptsize$\pm$0.2} \\
~DMC              & 73.4{\scriptsize$\pm$3.0} & 73.1{\scriptsize$\pm$1.0} & 33.0{\scriptsize$\pm$0.7} \\
\midrule
~DS (Ours)              & 59.1{\scriptsize$\pm$3.3} & 65.9{\scriptsize$\pm$0.8} & 23.2{\scriptsize$\pm$3.9} \\
~SetTrans (Ours)        & 87.5{\scriptsize$\pm$0.8} & 80.2{\scriptsize$\pm$0.1} & 34.3{\scriptsize$\pm$0.7} \\
~\ourmethod\  (Ours)      & \textbf{94.3}{\scriptsize$\pm$0.5} & \textbf{84.6}{\scriptsize$\pm$0.6} & \textbf{42.2}{\scriptsize$\pm$0.8} \\
\bottomrule
\end{tabular}
\setlength{\tabcolsep}{6pt} 
}
\end{wraptable}

\textbf{Results.} \Cref{tab:INR table} reports the accuracy of predicting the class (e.g., the digit in \Mst) from the weights of the KAN-based INR. 
\ourmethod outperforms all baselines by a large margin. 
SetTrans ranks second, suggesting that explicitly accounting for symmetries is advantageous, albeit suboptimal, as the symmetries it captures are broader than the KAN permutation symmetries. Finally, we observe that ``\textit{MLP + Align.} $>$ \textit{MLP + Aug.} $>$ \textit{MLP}'', aligning with intuition and validating the effectiveness of our alignment technique. Results over the synthetic dataset are provided in \Cref{app:Results on toy dataset}. 

\subsubsection{Out-of-distribution generalization to wider KANs}
\label{app:Out-of-distribution generalization to wider KANs}

Since \ourmethod operates on KAN-graphs via a GNN-based architecture, it can naturally be applied to KAN-graphs of varying topologies, e.g., differing in hidden-layer width or number of layers.
Hence, here we evaluate \ourmethod's ability to generalize to KAN architectures larger than those seen during training, focusing on the INR classification task over the \Mst and \FMst datasets. Our KAN-based INRs from \Cref{subsec:INR classification} share the topology $[2 \to h \to h \to 10]$, where $h$ denotes the hidden-layer width, with $h=32$. Here, we take those \ourmethod's trained on the $h=32$ KANs and evaluate it on progressively wider KAN architectures with $h \in \{48, 64, 80, 96\}$, none of which were seen during training.

\textbf{Results.} \Cref{tab:generalization exp} reports the out-of-distribution (OOD) accuracy of \ourmethod. We observe promising OOD generalization, which, as expected, degrades as $h$ increases and the distribution shift from the training setting ($h=32$) grows.

\begin{table}[h!]
\centering
\caption{\textbf{OOD generalization for INR classification.} Test accuracies of \ourmethod on the INR classification task when trained on KANs with hidden width $h=32$ and evaluated on wider, previously unseen architectures ($h \in \{48,64,80,96\}$). All architectures follow the topology $[2 \to h \to h \to 10]$. The \colorbox{indist}{blue column denotes the in-distribution setting}, while \colorbox{ood}{red columns denote out-of-distribution setting}.}
\label{tab:generalization exp}

\begin{tabular}{l>{\columncolor{indist}}c>{\columncolor{ood}}c>{\columncolor{ood}}c>{\columncolor{ood}}c>{\columncolor{ood}}c}
\hline
\textbf{Dataset} & $h=32$ & $h=48$ & $h=64$ & $h=80$ & $h=96$ \\
\hline
\Mst  & 94.3 $\pm$ 0.5 & 91.4 $\pm$ 0.5 & 81.0 $\pm$ 3.2 & 67.0 $\pm$ 4.3 & 57.1 $\pm$ 6.1 \\
\FMst & 84.6 $\pm$ 0.6 & 84.6 $\pm$ 0.6 & 84.3 $\pm$ 0.7 & 83.3 $\pm$ 0.8 & 82.2 $\pm$ 0.7 \\
\hline
\end{tabular}
\end{table}

\subsection{Accuracy prediction} 
\label{subsec:Accuracy Prediction}

Here, we consider the task of predicting the accuracy of a given KAN, based on its parameters.

\textbf{Setup and dataset construction.} 
We experiment on \Mst, \FMst, and \KMst. 
\begin{wrapfigure}[9]{r}{0.35\textwidth}
    \centering
    \includegraphics[width=\linewidth]{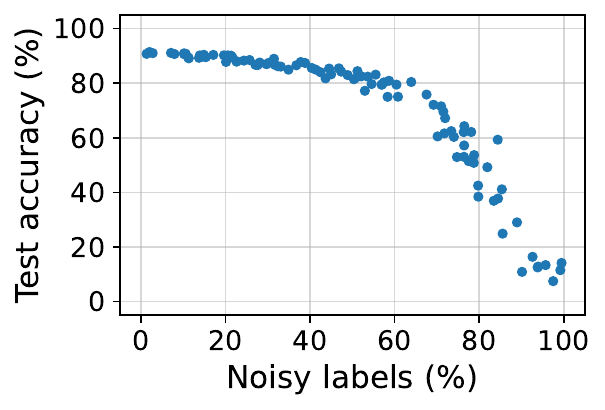}
    \label{fig:test_accuracies}
\end{wrapfigure}
Over each dataset, we train 4000 KAN models with a 
3000/500/500 train/validation/test split.
We observed that different KANs trained on these datasets yield similar accuracies. 
Thus, to make predicting accuracy from parameters more challenging, we introduce label noise: for each KAN, we randomly sample portions of the training data and shuffle their labels. 
As shown inset, this results in a diverse set of trained KANs with varying test accuracies (results for other datasets provided in \Cref{fig:test_accuracies_full} in \Cref{app:Extended Experimental Section}). 
The training pipeline is illustrated in \Cref{fig:WS-KAN training acc pred} in \Cref{app:Accuracy prediction -- Extended section}.

\textbf{Results.}  To test how well \ourmethod\ predicts the test accuracies, we follow \citet{lim2024graphmeta} and report two metrics: Mean Squared Error (MSE) and R-squared ($R^2)$. 
The results, summarized in \Cref{tab:generalization_prediction}, exhibit the same trend as in INR classification: \ourmethod\ consistently achieves the best performance across all datasets. The ordering of baselines remains unchanged, with our ablation baseline (DS) ranking second, followed by \textit{MLP + Alignment} as the next most effective approach.

\setlength{\tabcolsep}{5pt}
\begin{table}[h!]
\footnotesize
\centering
\caption{\textbf{Accuracy prediction}. Comparison of MSE and $R^2$  across datasets.}
\label{tab:generalization_prediction}
{%
\begin{tabular}{@{\hskip 3pt}lcccccc@{\hskip 2pt}}
\toprule
& \multicolumn{3}{c}{\textbf{MSE} (lower is better; $\downarrow$) [$\times 10^{3}$]}  
&
\multicolumn{3}{c}{\textbf{$\textbf{R}^2$} (higher is better; $\uparrow$) [$\times 10^{2}$]}
\\
\cmidrule(lr){2-4} \cmidrule(lr){5-7}
\textbf{Method} & {MNIST} & {F-MNIST} & {K-MNIST} 
& {MNIST} & {F-MNIST} & {K-MNIST} \\
\midrule
MLP      & 14.58{\scriptsize$\pm$0.02} & 11.55{\scriptsize$\pm$0.12} & 6.68{\scriptsize$\pm$0.16} 
         & 76.99{\scriptsize$\pm$0.04} & 69.59{\scriptsize$\pm$0.33} & 80.13{\scriptsize$\pm$0.49} \\
MLP + Aug.    & 10.41{\scriptsize$\pm$0.24} & 8.86{\scriptsize$\pm$1.47} & 5.33{\scriptsize$\pm$0.19} 
              & 83.56{\scriptsize$\pm$0.38} & 76.66{\scriptsize$\pm$3.88} & 84.15{\scriptsize$\pm$0.55} \\
MLP + Align.   & 5.26{\scriptsize$\pm$0.09} & 6.32{\scriptsize$\pm$0.15} & 3.33{\scriptsize$\pm$0.10} 
                  & 91.70{\scriptsize$\pm$0.15} & 83.37{\scriptsize$\pm$0.41} & 90.11{\scriptsize$\pm$0.31} \\
DMC               & 7.00{\scriptsize$\pm$0.06} & 6.46{\scriptsize$\pm$0.32} & 3.27{\scriptsize$\pm$0.05} 
                  & 88.95{\scriptsize$\pm$0.09} & 82.99{\scriptsize$\pm$0.85} & 90.29{\scriptsize$\pm$0.14} \\
\midrule
DS  (Ours)              & \textbf{3.29}{\scriptsize$\pm$0.12} & 3.90{\scriptsize$\pm$0.04} & 2.00{\scriptsize$\pm$0.11} 
                  & \textbf{94.81}{\scriptsize$\pm$0.18} & 89.73{\scriptsize$\pm$0.11} & 94.07{\scriptsize$\pm$0.32} \\
\ourmethod\ (Ours)      & \textbf{3.29}{\scriptsize$\pm$0.17} & \textbf{2.94}{\scriptsize$\pm$0.13} & \textbf{1.45}{\scriptsize$\pm$0.08} 
                  & \textbf{94.81}{\scriptsize$\pm$0.27} & \textbf{92.27}{\scriptsize$\pm$0.35} & \textbf{95.69}{\scriptsize$\pm$0.24} \\
\bottomrule
\end{tabular}
}
\vspace{-0.5em}
\end{table}

\setlength{\tabcolsep}{6pt} 

\subsection{Pruning mask prediction} 
\label{subsec:Pruning}
In this section, we tackle the challenging equivariant task of \emph{network pruning} for KANs, aiming to discard a subset of weights without significantly degrading performance.

\textbf{Motivation.} Most pruning methods are data-driven, requiring large amounts of data to determine which parameters to remove. For instance, activation-based approaches rely on recorded activation values, while gradient-based methods depend on training loss gradients. In contrast, pruning a model using only a simple forward pass, as enabled by \ourmethod (demonstrated below), is especially valuable, as it avoids repeated, data-intensive passes.

\textbf{Setup.} Here, we use the same datasets as in \Cref{subsec:Accuracy Prediction}. 
Supervision is obtained by applying a data-driven edge-based pruning algorithm for KANs\footnote{This (off-the-shelf) pruning algorithm is found in \url{https://github.com/KindXiaoming/pykan}}, which we denote as \emph{Oracle-prune}.
The core idea behind this algorithm is straightforward: it removes edges whose average activation values, computed from training data, fall below a predefined threshold (set to 0.01 in all experiments). We refer to this pruning method as \emph{Oracle-pruning}, as it is our supervision. 
The oracle pruning algorithm outputs a binary mask, where edges marked with 0 are pruned and those marked with 1 are retained. 
The task of interest is to predict this mask, see \Cref{fig:KAN_pruning_vis} in \Cref{app:Pruning mask prediction -- extended section} for the training pipeline.
Crucially, this task is equivariant: prediction is made for each individual edge of the KAN, rather than as a single prediction for the entire network.  
We are interested in evaluating two aspects: 
\textbf{(i)} how accurately \ourmethod\ predicts the pruning mask, 
and \textbf{(ii)} whether using the mask generated by \ourmethod\ leads to effective downstream pruning performance.

\begin{table}[h!]
\footnotesize
\centering
\caption{\textbf{Pruning mask prediction.}}
\label{tab:mask_prediction}
{%
\begin{tabular}{@{\hskip 3pt}lcccccc@{\hskip 3pt}}
\toprule
& \multicolumn{3}{c}{\textbf{Accuracy ($\uparrow$, \%)}} 
& \multicolumn{3}{c}{\textbf{ROC-AUC ($\uparrow$, \%)}} \\
\cmidrule(lr){2-4} \cmidrule(lr){5-7}
\textbf{Method} & {MNIST} & {F-MNIST} & {K-MNIST} 
& {MNIST} & {F-MNIST} & {K-MNIST} \\
\midrule
MLP      & 93.10{\scriptsize$\pm$\scalebox{0.8}{$<$}$0.01$} & 96.65{\scriptsize$\pm$\scalebox{0.8}{$<$}$0.01$} & 91.39{\scriptsize$\pm$\scalebox{0.8}{$<$}$0.01$} 
         & 87.12{\scriptsize$\pm$0.04} & 84.92{\scriptsize$\pm$0.38} & 75.32{\scriptsize$\pm$0.02} \\
MLP + Aug.    & 93.29{\scriptsize$\pm$\scalebox{0.8}{$<$}$0.01$}  & 96.65{\scriptsize$\pm$\scalebox{0.8}{$<$}$0.01$} & 91.39{\scriptsize$\pm$\scalebox{0.8}{$<$}$0.01$} 
              & 91.36{\scriptsize$\pm$0.12} & 86.21{\scriptsize$\pm$0.09} & 74.89{\scriptsize$\pm$0.01} \\
MLP + Align.   & 93.57{\scriptsize$\pm$0.01} & 96.64{\scriptsize$\pm$\scalebox{0.8}{$<$}$0.01$} & 91.52{\scriptsize$\pm$0.01} 
                  & 93.00{\scriptsize$\pm$0.03} & 91.66{\scriptsize$\pm$0.10} & 82.76{\scriptsize$\pm$0.05} \\
DMC               & 93.07{\scriptsize$\pm$\scalebox{0.8}{$<$}$0.01$} & 96.59{\scriptsize$\pm$\scalebox{0.8}{$<$}$0.01$} & 91.39{\scriptsize$\pm$\scalebox{0.8}{$<$}$0.01$} 
                  & 84.27{\scriptsize$\pm$0.06} & 84.39{\scriptsize$\pm$0.01} & 75.06{\scriptsize$\pm$0.02} \\
\midrule
DS   (Ours)             & 94.34{\scriptsize$\pm$0.02} & 96.90{\scriptsize$\pm$0.09} & 94.34{\scriptsize$\pm$0.02} 
                  & 95.45{\scriptsize$\pm$0.05} & 95.81{\scriptsize$\pm$0.04} & 95.45{\scriptsize$\pm$0.05} \\
\ourmethod\  (Ours)     & \textbf{97.93}{\scriptsize$\pm$0.19} & \textbf{98.93}{\scriptsize$\pm$0.05} & \textbf{97.72}{\scriptsize$\pm$0.14} 
                  & \textbf{99.54}{\scriptsize$\pm$0.01} & \textbf{99.72}{\scriptsize$\pm$0.02} & \textbf{99.46}{\scriptsize$\pm$0.09} \\
\bottomrule
\end{tabular}
}
\end{table}

\textbf{Results (i).} 
We evaluate mask prediction as a binary classification task using the metrics ROC-AUC and Accuracy. Results are provided in \Cref{tab:mask_prediction}. \ourmethod\ consistently outperforms all baselines across all datasets and evaluation metrics. 
Again, the DS approach emerges as the second best overall. Notably, the hierarchy among the MLP variants mirrors our earlier findings, further supporting the effectiveness of our alignment strategy.

\begin{figure}[htbp]
    \centering
    \begin{subfigure}[t]{0.4\textwidth}
        \centering
        \includegraphics[width=\textwidth]{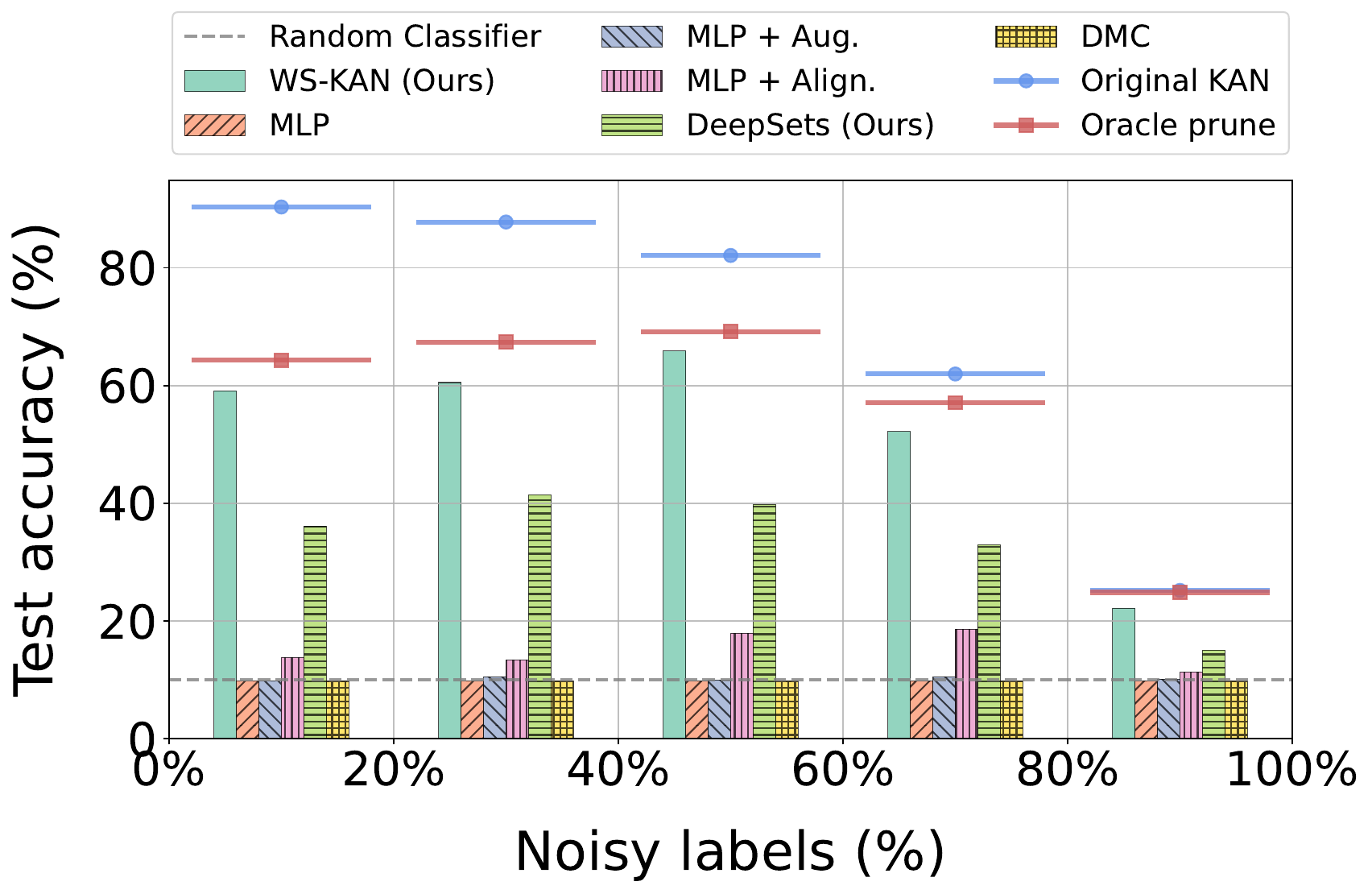}
        \caption{Downstream prune accuracy.}
        \label{fig:prune_acc}
    \end{subfigure}%
    \begin{subfigure}[t]{0.4\textwidth}
        \centering
        \includegraphics[width=\textwidth]{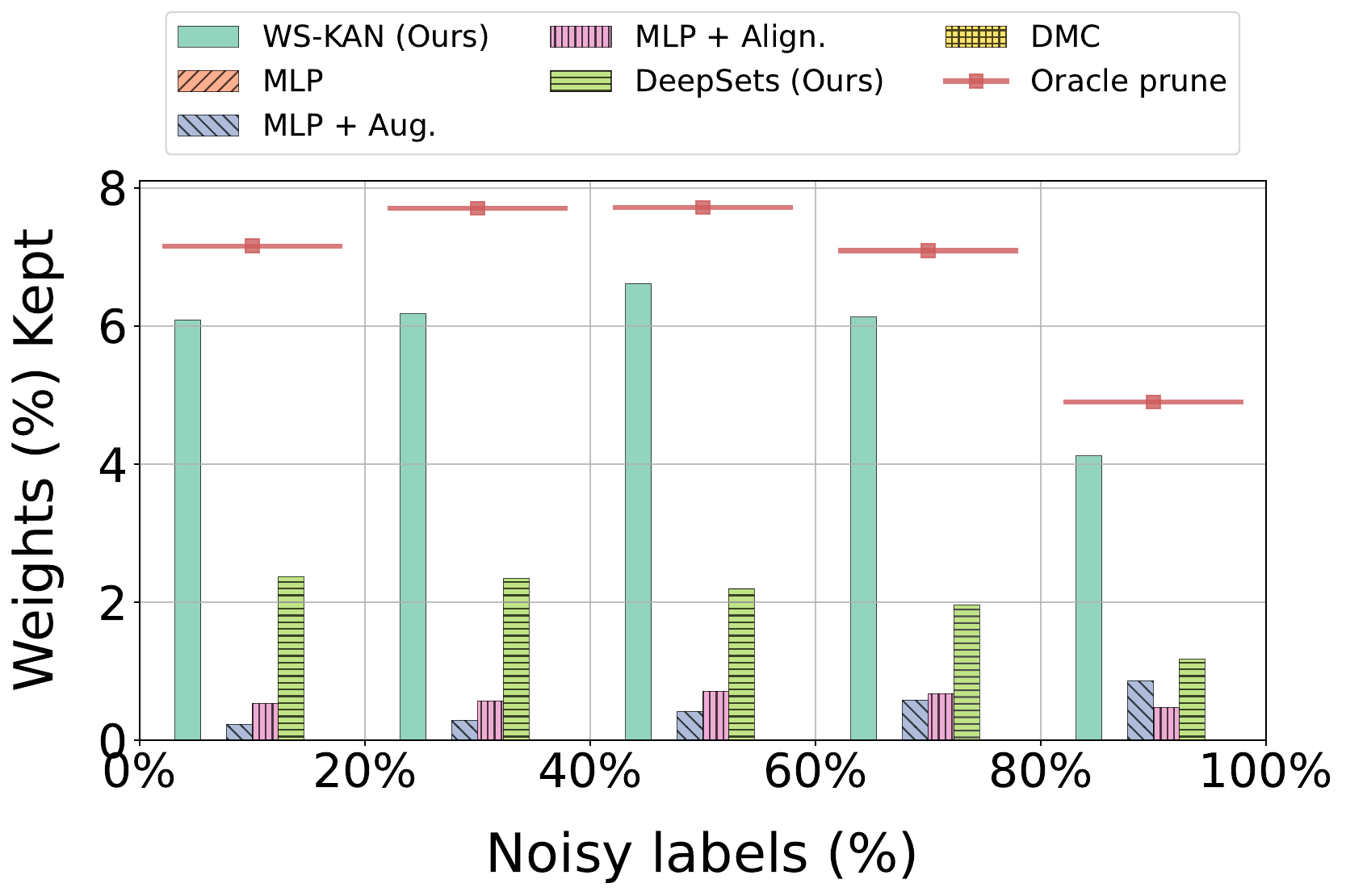}
        \caption{Downstream prune kept weight \%.}
        \label{fig:prune_weight_kept}
    \end{subfigure}%
    \hfill
    \begin{subfigure}[t]{0.16\textwidth}
        \centering
        \includegraphics[width=\textwidth]{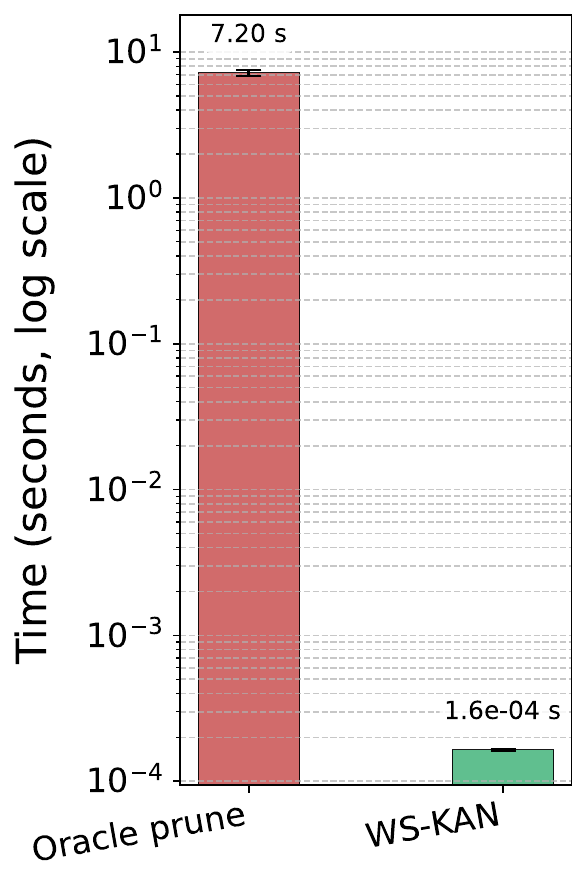}
        \caption{Prune timing.}
        \label{fig:prun_time}
    \end{subfigure}
    \caption{\textbf{Downstream pruning performance across methods over KANs trained on \Mst.} 
    We report: (i) \textbf{Test accuracy}: the downstream accuracy of pruned networks, averaged over non-overlapping bins of 20\%, to highlight the relative effectiveness of pruning strategies under varying noise levels -- \Cref{fig:prune_acc};
    (ii) \textbf{Kept weights}: the percentage of weights retained after pruning, averaged over the same bins as in (i) -- \Cref{fig:prune_weight_kept}; 
    and (iii) \textbf{Pruning time} $(\downarrow)$: the computational cost comparison (log scale) in seconds, between WS-KAN and Oracle prune -- \Cref{fig:prun_time}, low is better.}
\end{figure}

\textbf{Results for downstream pruning performance (ii).} To properly evaluate downstream pruning, it is not enough to only report the downstream accuracy achieved by applying a mask. A trivial mask that leaves all weights untouched would naturally yield high accuracy, but would provide no practical benefit. A meaningful mask must therefore strike a balance: it should maintain strong downstream accuracy while also inducing sparsity in the model.

We now evaluate how well \ourmethod\ achieves this trade-off, and present two complementary plots. 
In \Cref{fig:prune_acc}, we report the downstream accuracy (y-axis) achieved by pruned models across different noise levels in the training labels (x-axis, binned in 20\% non-overlapping intervals).
In \Cref{fig:prune_weight_kept}, we show the corresponding fraction of weights retained by each method. 
Together, these plots illustrate both the accuracy–sparsity trade-off and how each method compares against the oracle-pruning baseline. It is clear that \ourmethod\ most closely follows the accuracy–sparsity trade-off of the \emph{oracle-pruning} technique. Moreover, DS consistently ranks second best, whereas all other approaches perform poorly. The only partial exception is MLP + Alignment, which shows some utility but still falls significantly behind both DS and \ourmethod.
Additional downstream pruning results are available in \Cref{app:Extended Experimental Section}. Importantly, we observe that \ourmethod\ offers a significant timing advantage over \emph{Oracle Prune}, being up to five orders of magnitude faster (\Cref{fig:prun_time}).

\textbf{Discussion.} \ourmethod\ consistently provides the most effective WS technique for learning over KANs. Importantly, our ablation baselines (DS/SetTrans) are generally second-best. This underscores the value of incorporating the structural properties of KANs, since even a suboptimal integration (as per DS/SetTrans) still proves beneficial to some extent. Although our alignment method (see \Cref{app:Aligning Kolmogorov-Arnold Networks}) does not outperform structure-aware approaches, it remains a strong alternative. Across nearly all tasks/datasets/metrics combinations, we consistently observe the ordering: ``\textit{MLP + Align.} $>$ \textit{MLP + Aug.} $>$ \textit{MLP}''. Finally, we note that \ourmethod\ is computationally efficient, with complexity scaling linearly in the number of edges of the KAN-graph; see \Cref{app:Complexity and runtime analysis} for a detailed complexity and runtime analysis.

\section{Conclusions}
We addressed the development of weight-space models for KANs by performing a symmetry analysis that guided the design of our \ourmethod\ architecture. We also built comprehensive model zoos to support future research and evaluation of weight-space models for KANs.

Several promising future directions remain open. While we have shown that \ourmethod\ can generalize to wider KAN architectures unseen during training (\Cref{app:Out-of-distribution generalization to wider KANs}), extending this to deeper architectures and broader topology variations remains an exciting avenue. Beyond this, the availability of weight-space models for both KANs and MLPs naturally invites the study of transformations between the two, enabling us to leverage their complementary strengths. Many analytical tools are well established for MLPs; by converting a KAN into an equivalent MLP (while preserving its function), these tools can be used to study the original KAN. Conversely, converting an MLP into a KAN allows us to exploit the interpretability of KANs \citep{baravsin2024exploring}, offering new insights into the MLP. Finally, evaluating \ourmethod's generalization to other KANs -- e.g., CNN-KANs -- would be an interesting direction.

\section*{Acknowledgments}
G.B. is supported by the Jacobs Qualcomm PhD Fellowship. HM is supported by the Israel Science Foundation through a personal grant (ISF 264/23) and an equipment grant (ISF 532/23), and by the Career Advancement Chairs in Artificial Intelligence -- Schmidt Futures.

\section*{Reproducibility Statement}
Our code is available at \href{https://github.com/BarSGuy/KAN-Graph-Metanetwork}{https://github.com/BarSGuy/KAN-Graph-Metanetwork}. All implementation details required to replicate the results and evaluations are provided in \Cref{app:Extended Experimental Section}.

\newpage

\bibliography{main_bib}
\bibliographystyle{iclr2026_conference}

\appendix
\newpage
\section{Appendix Roadmap}

This appendix gathers the mathematical background, full experimental details, the alignment procedure, and proofs that support the main text. See guide below.

\begin{itemize}
  \item \textbf{\Cref{app:B-splines} (B-splines).} Reviews uniform B-splines and clarifies how they parameterize the univariate functions in KANs.

  \item \textbf{\Cref{app:Extended Experimental Section} (Extended Experimental Section).} Provides all details to create the model zoos for each task and reproduce all results:
  \begin{itemize}
    \item \textbf{\Cref{app:INR classification: Extended section}}  -- INR classification.
    \item \textbf{\Cref{app:Accuracy prediction -- Extended section}} -- Accuracy prediction.
    \item \textbf{\Cref{app:Pruning mask prediction -- extended section}}  -- Pruning mask prediction.
    \item \textbf{\Cref{app:Ablation studies}} -- Ablation: positional encoding and bidirectional message passing. 
    \item \textbf{\Cref{app:Complexity and runtime analysis}} -- Complexity and runtime analysis.
    
  \end{itemize}

  \item \textbf{\Cref{app:Aligning Kolmogorov-Arnold Networks} (Aligning Kolmogorov-Arnold Networks).} Here we extend the alignment ideas presented in \citet{ainsworth2022git} to KANs.
  \begin{itemize}
  \item \textbf{\Cref{app:Aligning a dataset of trained KANs} (Aligning a model zoo or a dataset).} Here we describe how our alignment procedure discussed in \Cref{app:Aligning Kolmogorov-Arnold Networks} can be applied to an entire dataset of trained KANs.
  \end{itemize}

  \item \textbf{\Cref{app:Proofs} (Proofs).} Collects all formal propositions and proofs.
    \item \textbf{\Cref{app:Large Language Model (LLM) Usage} (Large Language Model (LLM) Usage).} Here we provide details on how we used LLMs. 
\end{itemize}

\section{B-splines}
\label{app:B-splines}
B-splines are smooth piecewise polynomial functions over a domain $[a, b]$, defined using a knot vector $\mathbf{T}$. Specifically, a degree-$k$ B-spline with $G$ grid points is expressed as:
\begin{wrapfigure}[3]{r}{0.4\textwidth}
  \centering
  \includegraphics[width=0.35\textwidth]{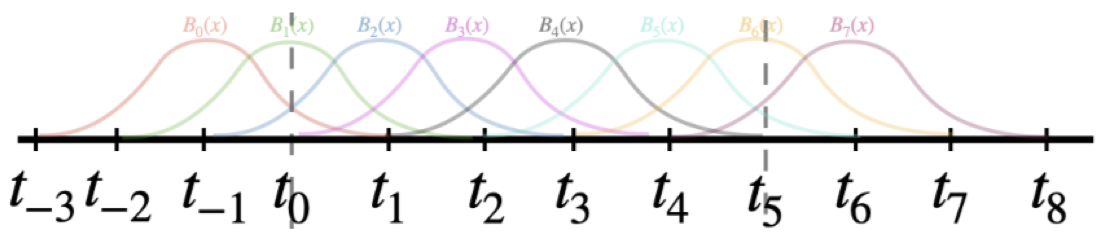}
\end{wrapfigure}
\begin{equation}
    B(x) = \langle \bm{c}, \bm{B}(x)\rangle = \sum_{i=0}^{G+k-1} c_i B_i(x),
\end{equation}
where $B_i(x)$ are basis functions and $c_i$ are the learnable parameters; see the inset\footnote{The figure is taken from \cite{liu2025kan}.} illustration for the basis functions corresponding to $G=5$ and $k=3$.
The knot vector $\mathbf{T} = [t_{-k}, \ldots, t_0, t_1, \ldots, t_G, \ldots, t_{G+k}]$ is larger than the domain itself, and determines where and how the basis functions are defined. The basis functions are defined recursively using the Cox–de Boor formula \citep{de1978practical}; the explicit expression is provided below. For uniform B-splines, of which we focus on in this paper, the knots are equally spaced with $t_0 = a$, $t_G = b$, and spacing $\Delta t = t_i - t_{i-1}$. 

\textbf{Cox--de Boor Formula}  The Cox--de Boor recursion \citet{de1978practical} defines the B-spline basis functions as follows:  

\[
N_{i,k'}(x) =  
\begin{cases}
1, & \text{if } t_i \leq x < t_{i+1}, \\
0, & \text{otherwise},
\end{cases}
\quad \text{for } k' = 0,
\]

\[
N_{i,k'}(x) = \frac{x - t_i}{t_{i+k'} - t_i} \, N_{i,k'-1}(x) 
\;+\; \frac{t_{i+k'+1} - x}{t_{i+k'+1} - t_{i+1}} \, N_{i+1,k'-1}(x), 
\quad \text{for } k' > 0.
\]

We denote  
\[
B_i(x) \coloneqq N_{i-k,k}(x).
\]  
In the uniform case, the denominators simplify to \(k' \Delta t\).

\section{Extended Experimental Section}
\label{app:Extended Experimental Section}
The implementation of \ourmethod\ was carried out using PyTorch~\citep{paszke2019pytorch} and PyTorch Geometric~\citep{fey2019fast}, which are distributed under the BSD and MIT licenses, respectively. Hyperparameter optimization was conducted with the Weight and Biases framework~\citep{wandb}. Below we provide the additional details necessary to reproduce our experiments. These include instructions on how we constructed the model zoos for each task, how we trained both \ourmethod\ and the baseline models, as well as the hyperparameter grids and other relevant configurations. Importantly, for all WS models considered we employed the exact same data split and hyperparameter grid search.
For the specific procedure used to align a dataset of trained KANs, please refer to \Cref{app:Aligning a dataset of trained KANs}.

\textbf{Optimizer and schedulers.} For all considered datasets and tasks for WS models considered, we use the AdamW optimizer \citet{loshchilov2017decoupled} in combination with a linear learning rate scheduler, incorporating a warm-up phase over the first 100 of training steps.

All KANs we have trained for constructing the various model zoos use the \texttt{fit} function from the PyKAN library\footnote{\url{https://github.com/KindXiaoming/pykan}}.

\subsection{INR classification: Extended section}
\label{app:INR classification: Extended section}

\subsubsection{Model zoo -- INR classification}
\label{app:Model zoo -- INR classification}
\paragraph{Constructing the synthetic 2D sine wave INR dataset.}  
We start by sampling a frequency vector uniformly at random from the range $\bm{w} \in [0.5, 10]^2$. Each sample defines a 2D sine wave of the form,
\[
g(\bm{x}) = \sin(\bm{w} \cdot \bm{x}),
\]
where $\bm{x}$ is the input and $g(\bm{x})$ is the target output. To learn this mapping, we train a KAN-based implicit neural representation (INR) with architecture depth/width configuration of $[2, 32, 32, 1]$.
\begin{figure}[b]
  \centering
  \includegraphics[width=0.79\textwidth]{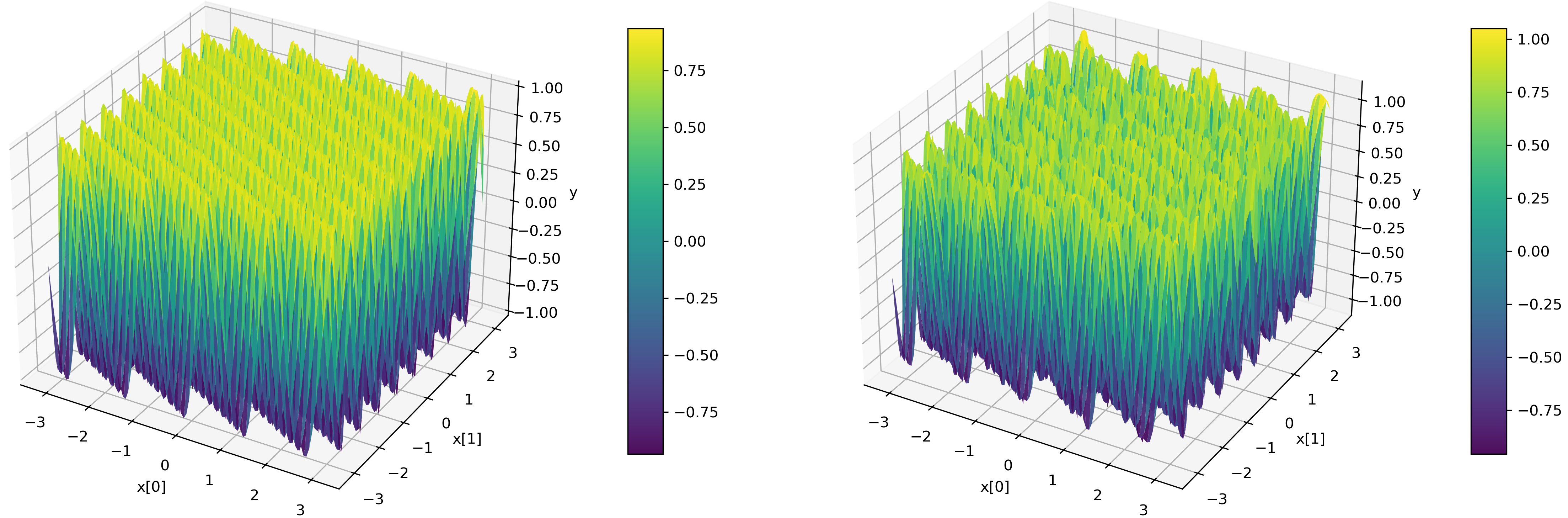}
  \caption{Example of a KAN-based INR applied to the synthetic 2D sine wave dataset. The left panel shows the ground truth, and the right panel shows the reconstructed result.}
  \label{fig:SineINR}
\end{figure}
We train 1{,}000 independent models, using a split of 800/100/100 for training, validation, and testing, respectively. Training is performed with a batch size of 128, and the learning rate is fixed at $0.01$, for $1,000$ epochs. The univariate functions in the KAN is parameterized via B-splines with $G = 30$ and $k = 3$. See example of a KAN-based INR for a samples sine function in \Cref{fig:SineINR}.

\paragraph{Constructing INRs for \Mst, \FMst, \Cif.}
For each dataset, we train a KAN-based implicit neural representation (INR) model that maps pixel coordinates to their corresponding intensity values—grayscale for \Mst and \FMst, and RGB for \Cif. The network architecture is configured as $[2, 32, 32, 1]$ for \Mst and \FMst, and $[2, 32, 32, 3]$ for \Cif to account for its three color channels. For all datasets, we adopt a batch size of 128 and train for 1,000 epochs using a fixed learning rate of $0.01$. The univariate functions in the KAN is parameterized via B-splines with $G = 10$ and $k = 3$. An illustration of a KAN-based INR is provided in \Cref{fig:inr_reconstructions_app}.

\begin{figure}[htbp]
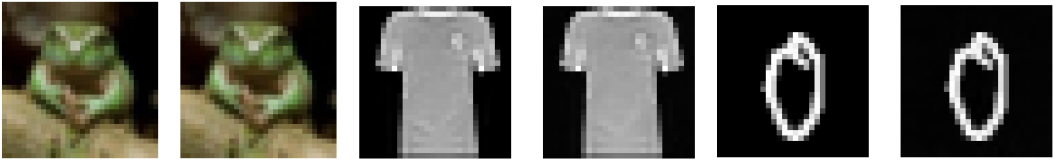

    \centering
    \begin{subfigure}{0.32\textwidth}
        \centering
        \includegraphics[width=\linewidth]{figures/INR_recons_cifar10_PSNR_40_95.png}
    \end{subfigure}\hfill
    \begin{subfigure}{0.32\textwidth}
        \centering
        \includegraphics[width=\linewidth]{figures/INR_recons_Fashion_Mnist_PSNR_40_65.png}
    \end{subfigure}\hfill
    \begin{subfigure}{0.32\textwidth}
        \centering
        \includegraphics[width=\linewidth]{figures/INR_recons_MNIST_PSNR_41_16.png}
    \end{subfigure}
    \caption{Reconstructions from INR on CIFAR-10, Fashion-MNIST, and MNIST. All PSNRs are more than 40.}
    \label{fig:inr_reconstructions_app}
\end{figure}

\subsubsection{INR classification - additional details}

\paragraph{Hyperparameters.} For all datasets considered (except the sine wave dataset), we trained both \ourmethod\ and the baseline models on a randomly sampled subset of 10,000 trained KANs, with an additional 5,000 reserved for validation. The KANs used for testing are those that were trained on the original dataset’s test images. For each considerd WS model, we used 4 layers, a hidden dimension of 128, batch size of 128, weight decay of 0.01, dropout of 0.2, and 300 epochs. For the learning rate, we performed a search over the set $\{ 0.001, 0.0001 \}$.

For the sine wave dataset, we made the following adjustments: training was conducted for 200 epochs with a fixed learning rate of 0.001. Additionally, we experimented with varying training set sizes, using subsets of $\{ 100, 200, 500, 800 \}$ from the available 800 samples.

\subsubsection{Results on synthetic dataset}
\label{app:Results on toy dataset}
\begin{figure}[b!]
\centering
\includegraphics[width=0.9\linewidth]{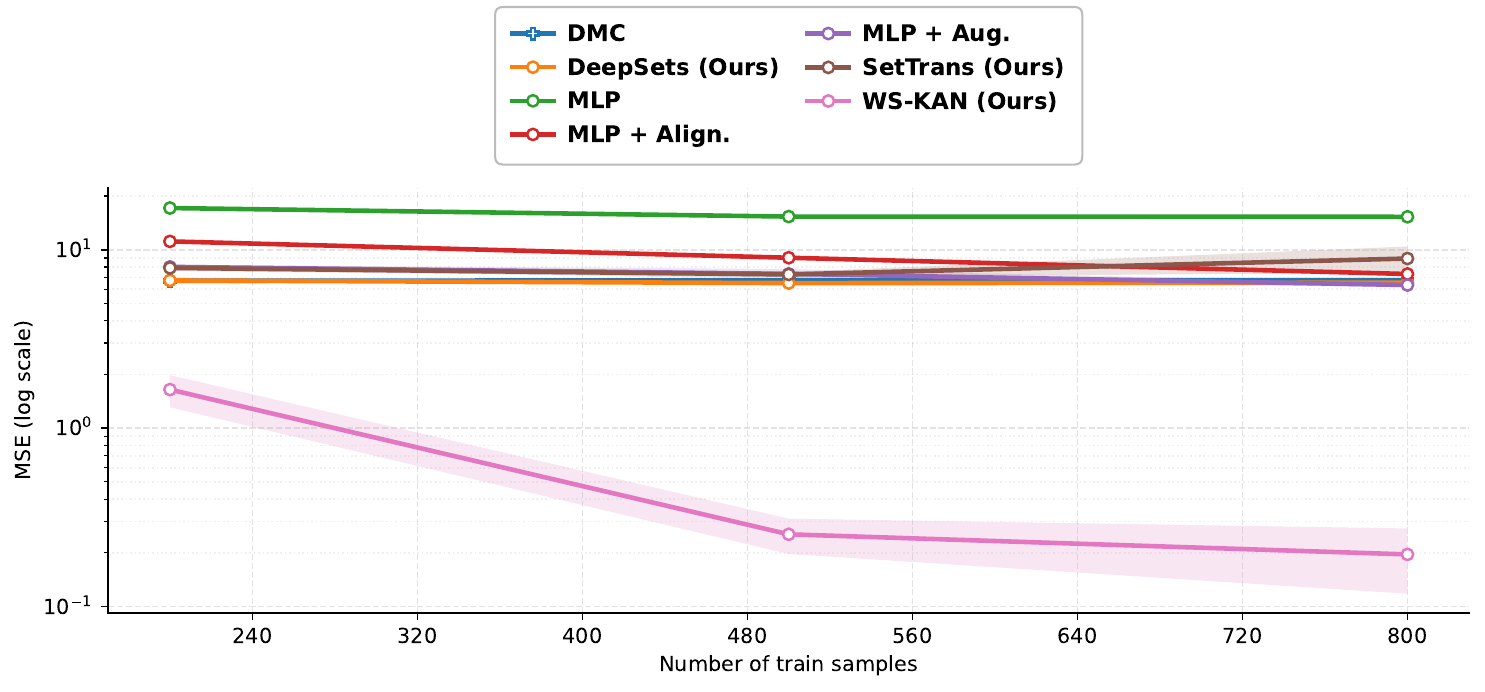}
\caption{Results on the synthetic 2D sine wave INR dataset.}
\label{fig:sine_res}
\end{figure}
\textbf{Synthetic dataset: 2D Sine waves.} We constructed a dataset of KAN-based INRs for 2D sine waves -- see \Cref{app:Model zoo -- INR classification} for more details on this dataset construction. The task is to predict the frequency $\bm{w}$ of a given test KAN-based INR, via its parameters. To evaluate the generalization capabilities of the architectures on this synthetic dataset, we repeat the experiment while varying the number of training examples (INRs). As shown in \Cref{fig:sine_res}, \ourmethod\ consistently outperforms all baseline methods, even when trained with only a small number of examples.

\begin{figure}[h!]
    \centering
    \begin{subfigure}{0.32\textwidth}
        \centering
        \includegraphics[width=\linewidth]{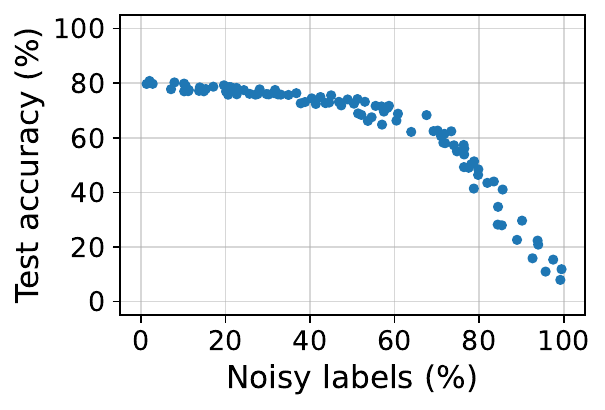}
        \caption{\FMst}
        \label{fig:subfig_fmnist}
    \end{subfigure}
    \hfill
    \begin{subfigure}{0.32\textwidth}
        \centering
        \includegraphics[width=\linewidth]{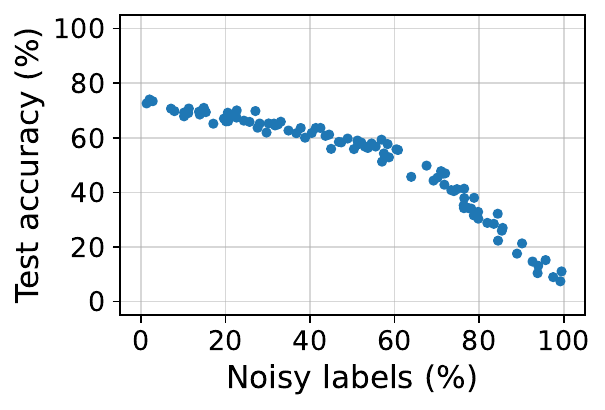}
        \caption{\KMst}
        \label{fig:subfig_kmnist}
    \end{subfigure}
    \hfill
    \begin{subfigure}{0.32\textwidth}
        \centering
        \includegraphics[width=\linewidth]{figures/prun_Mnist_weight_editing_baseline_kan_accuracies.pdf}
        \caption{\Mst}
        \label{fig:subfig_mnist}
    \end{subfigure}

    \caption{\textbf{KANs test accuracies.} Scatter plots of KAN accuracies on the test set (Y-axis) as a function of the number of noisy labels (X-axis).}
    \label{fig:test_accuracies_full}
\end{figure}

\subsection{Accuracy prediction -- extended section}
\label{app:Accuracy prediction -- Extended section}
The weight-space training pipeline for the task of accuracy prediction is illustrated in
\Cref{fig:WS-KAN training acc pred}.

\begin{figure}[h!]
    \centering
    \includegraphics[width=0.8\textwidth]{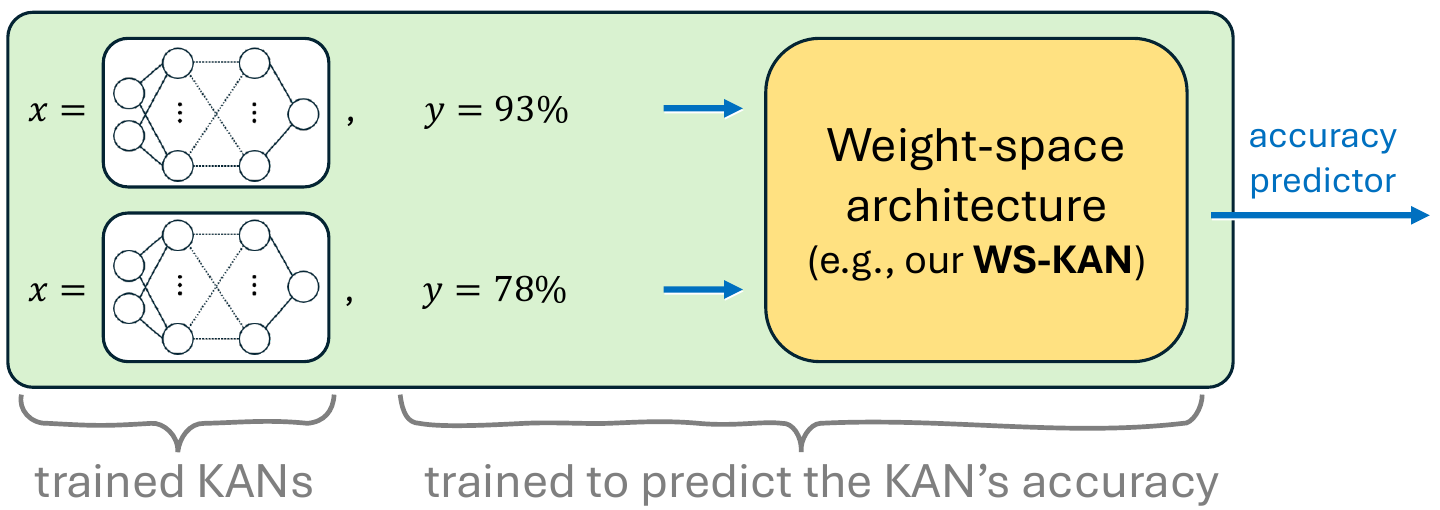}
    \caption{\textbf{Weight-space training for accuracy prediction}.
    We train KANs under varying noise levels, then fit a WS model (e.g., \ourmethod) to predict their accuracy from weights.}
    \label{fig:WS-KAN training acc pred}
\end{figure}

\subsubsection{Model zoo -- accuracy prediciton}
\label{app:Model zoo creation accuracy}
To construct the model zoos for the task of accuracy prediction, we used \Mst, \FMst, and \KMst.
For each dataset, we trained 4,000 KAN models with a 3000/500/500 split for training, validation, and testing. Each KAN model we trained (to classify the images in the dataset at hand) used 3 layers. For \Mst and \FMst, the layer dimensions were $[784, 32, 32, 10]$, where $784 = 28 \times 28$ corresponds to the number of grayscale input pixels. For \Cif, the dimensions were $[3072, 32, 32, 10]$, with $3072 = 32 \times 32 \times 3$ accounting for the RGB input pixels. Note that $10$ denotes the number of target classes across all datasets.

We observed that KANs trained on these datasets achieved similar accuracy levels. To increase the difficulty of predicting accuracy from parameters, we introduced label noise. Specifically, for each KAN, we randomly selected portions of the training data and permuted their labels.
In \Cref{fig:test_accuracies_full}, we report the test accuracies over a random sample of 100 test instances. 

We trained those KANs for 100 epochs, with a fixed learning rate of 0.01 and a batch size of 128. We again used a grid size of $G = 5$ and $k = 3$ for the B-splines that define the univariate functions in the trained KANs.

\subsubsection{Accuracy prediction -- additional details}
\label{app:Accuracy prediction -- additional details}
\paragraph{Hyperparameters.} The hyperparameters used in all experiments, across tasks, datasets, and baselines, are summarized in \Cref{tab:hyperparameters acc pred}.

\begin{table}[h!]
\centering
\begin{tabular}{l|r}
\hline
\textbf{Parameter} & \textbf{Values} \\ \hline
Embedding size            & $\{128, 32\}$ \\
Number of epochs          & $100$ \\
Learning rate  & $\{0.001, 0.005, 0.0001\}$ \\
Dropout & $0.2$ \\
Number of layers       & $\{4, 1\}$ \\ 
Batch size & $32$ \\
\hline
\end{tabular}
\caption{Hyperparameter configuration used in the experiments for accuracy prediction.}
\label{tab:hyperparameters acc pred}
\end{table}

\subsection{Pruning mask prediction -- extended section}
\label{app:Pruning mask prediction -- extended section}
The pipeline for training a WS model for the task of predicting the pruning mask is provided in \Cref{fig:KAN_pruning_vis}.
\begin{figure}[h!]
    \centering
    \includegraphics[width=0.9\textwidth]{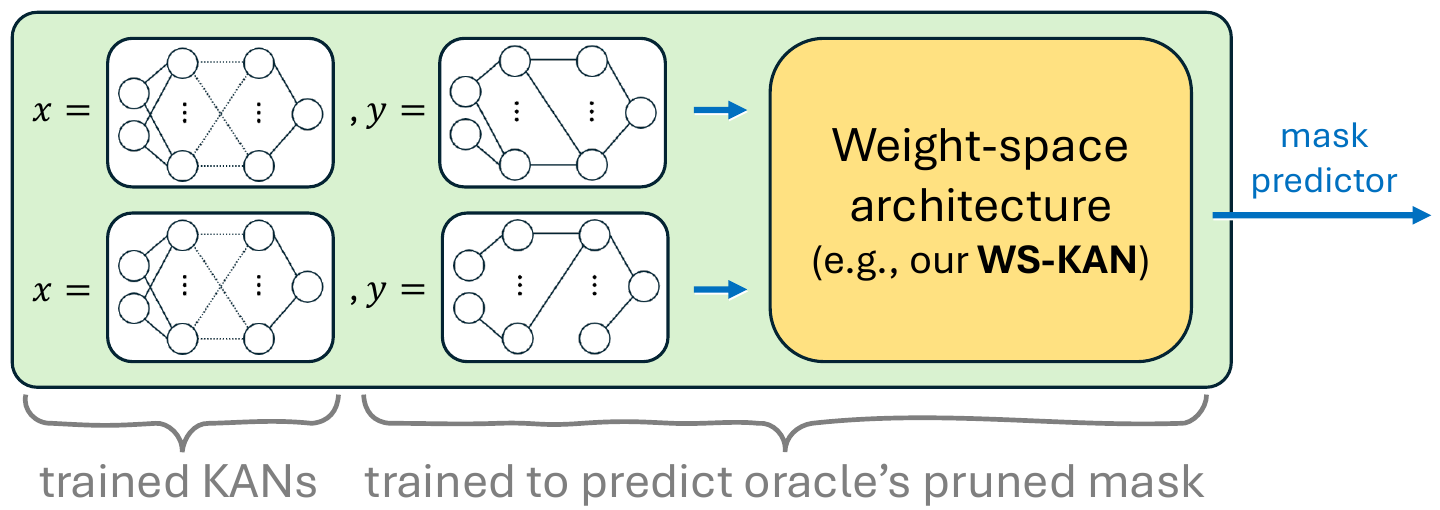}
    \caption{\textbf{Pipeline for training a WS model for the task of pruning mask prediction.}}
    \label{fig:KAN_pruning_vis}
\end{figure}

\subsubsection{Model Zoo -- Pruning}
The model zoos used in these experiments build upon those described in \Cref{app:Model zoo creation accuracy}. More specifically, starting from the KANs obtained in \Cref{app:Model zoo creation accuracy}, we applied the \emph{Oracle prune} algorithm—an edge-pruning method provided by the PyKAN library\footnote{\url{https://github.com/KindXiaoming/pykan}.}. Specifically, all edges whose average activation on the training dataset fell below a fixed threshold were removed. Throughout all experiments, we used a threshold of 0.01.

\subsubsection{Pruning mask prediction -- additional details and results}
We note that for the test metrics ROC-AUC and Accuracy reported in \Cref{tab:mask_prediction}, we used the full set of 500 test samples. Since Accuracy is a threshold-dependent metric, we selected the threshold by maximizing validation accuracy and applied the same threshold during testing.

For the pruning plots for each dataset, as shown in \Cref{fig:prune_acc,fig:prune_weight_kept,fig:Fminst prunine,fig:kmnist pruning}, we used a random subset of 100 KANs from the test set, the same as the one for \Cref{fig:test_accuracies_full}.This decision stems from the high computational cost of generating such plots, which requires full inference on the entire dataset (e.g., \Mst) for each pruned KAN produced by every WS method.

\paragraph{Hyperparameter.} For all datasets, we adopted a consistent training setup, closely aligned with the configuration described in \Cref{app:INR classification: Extended section}. Specifically, we used a 4-layer architecture with a hidden dimension of 128, a batch size of 32, weight decay of 0.01, and dropout of 0.2. Training was performed for 15 epochs. For the learning rate, we conducted a grid search over $\{0.001, 0.0001\}$.
\begin{figure}[h!]
    \centering
    \begin{subfigure}[t]{0.48\textwidth}
        \centering
        \includegraphics[width=\textwidth]{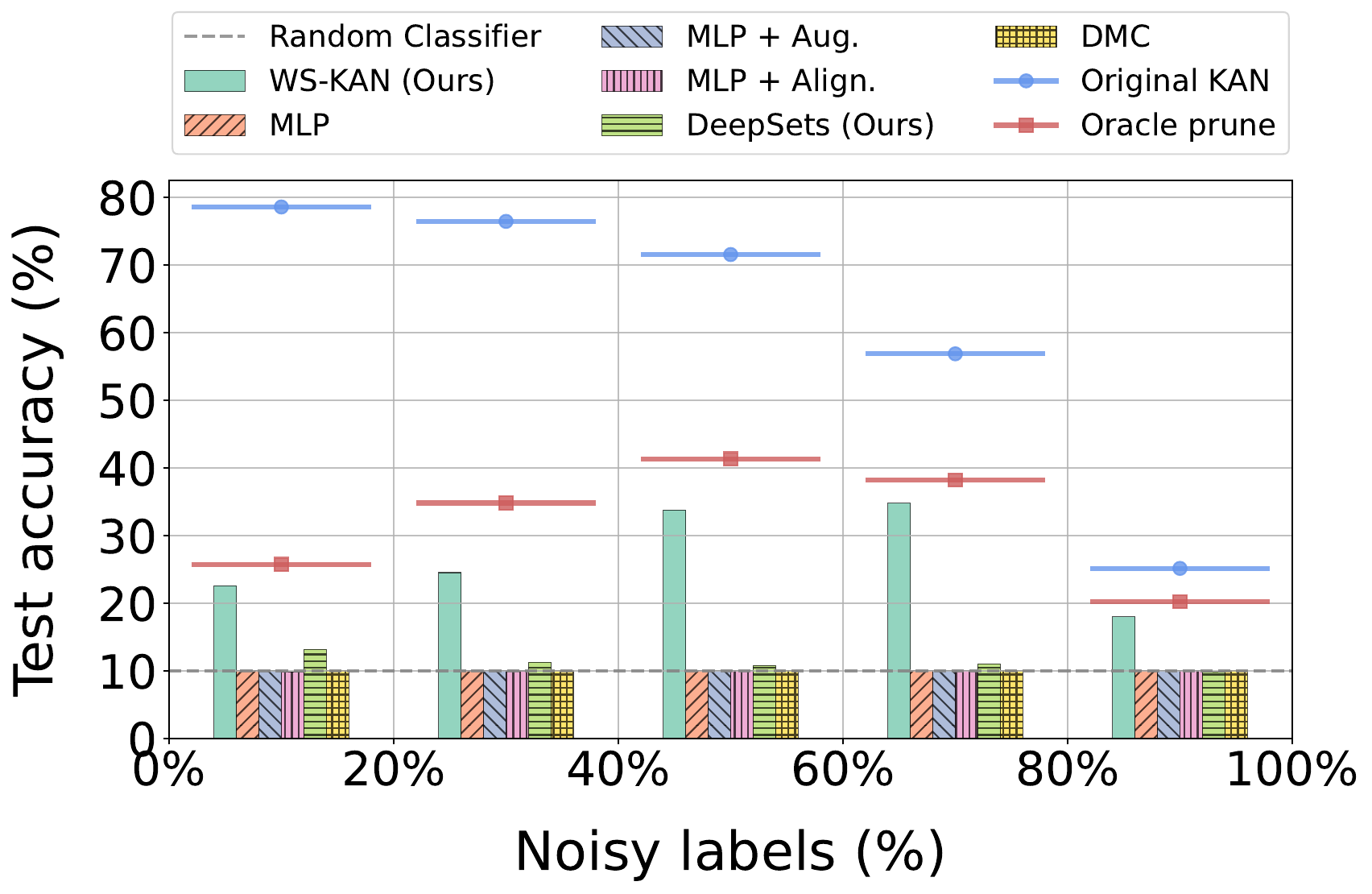}
        \caption{Downstream prune accuracy.}
        \label{fig:prune_accFmnist}
    \end{subfigure}%
    \begin{subfigure}[t]{0.48\textwidth}
        \centering
        \includegraphics[width=\textwidth]{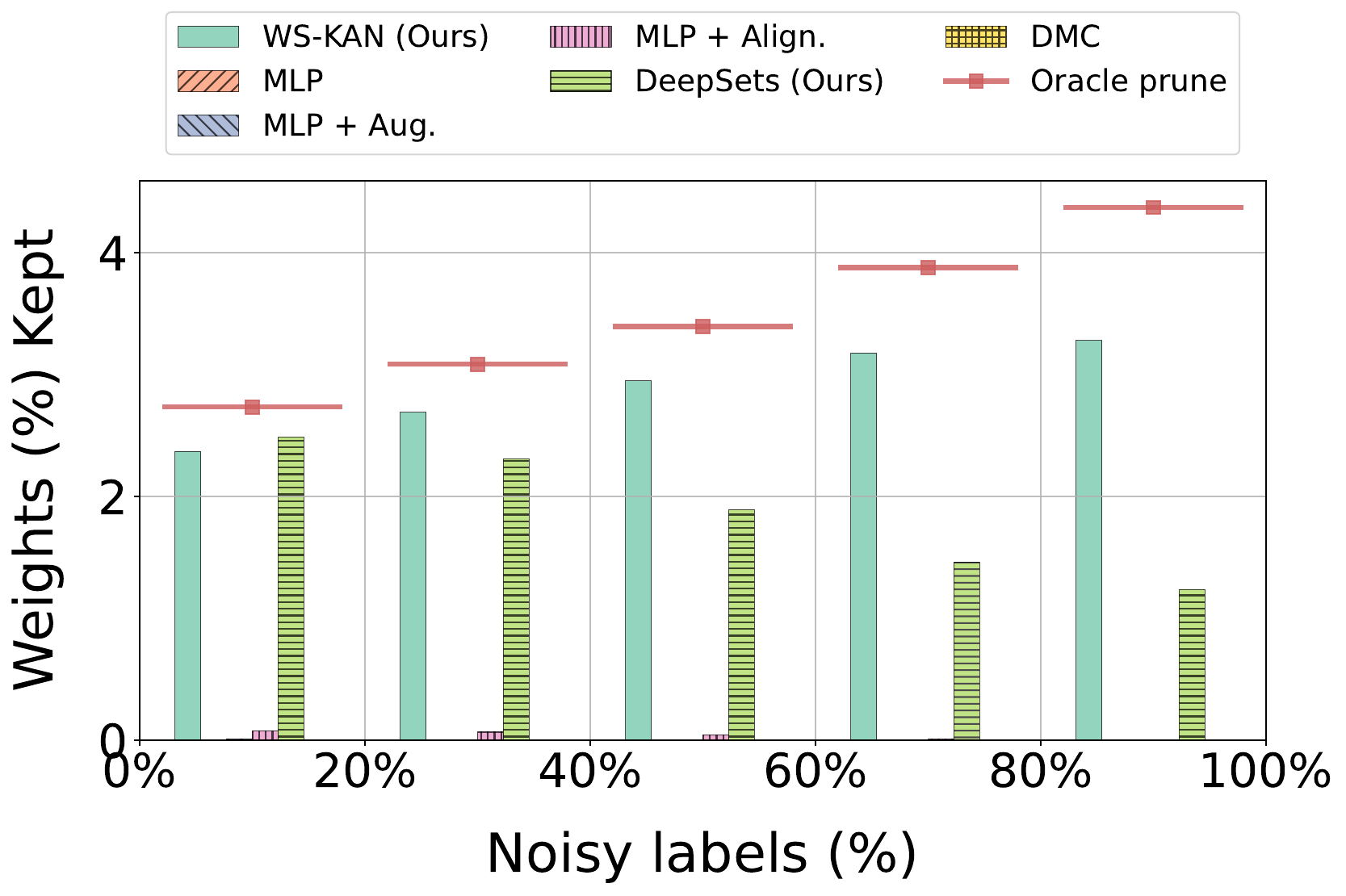}
        \caption{Downstream prune kept weight \%.}
        \label{fig:prune_weight_keptFmnist}
    \end{subfigure}%
    \caption{\textbf{Downstream pruning performance across methods over \FMst.} 
    We report: (i) \textbf{Test accuracy}: the downstream accuracy of pruned networks, averaged over non-overlapping bins of 20\%, to highlight the relative effectiveness of pruning strategies under varying noise levels -- \Cref{fig:prune_accFmnist};
    (ii) \textbf{Kept weights}: the percentage of weights retained after pruning, averaged over the same bins as in (i) -- \Cref{fig:prune_weight_keptFmnist}.}
    \label{fig:Fminst prunine}
\end{figure}

\begin{figure}[h!]
    \centering
    \begin{subfigure}[t]{0.48\textwidth}
        \centering
        \includegraphics[width=\textwidth]{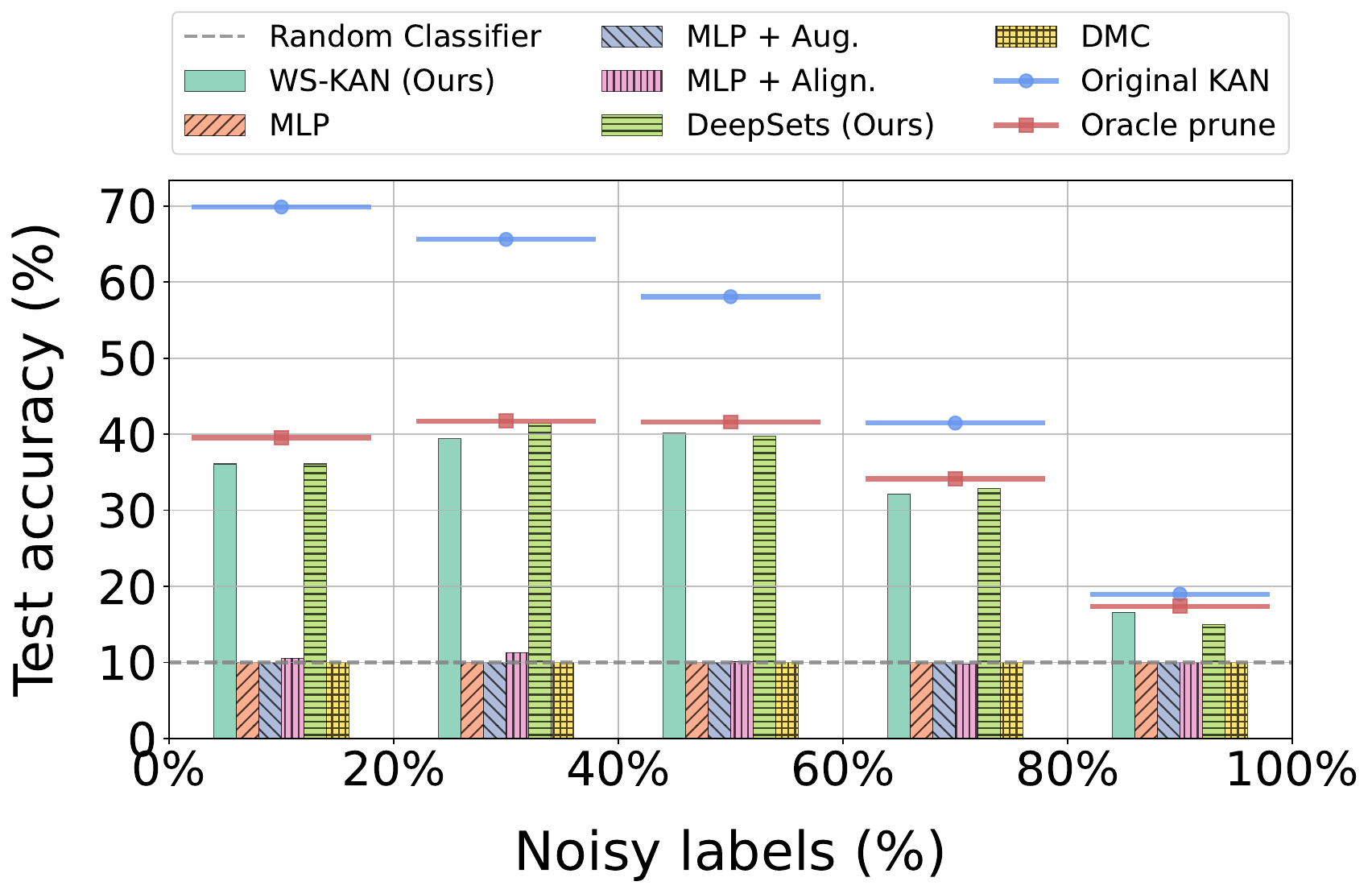}
        \caption{Downstream prune accuracy.}
        \label{fig:prune_accKmnist}
    \end{subfigure}%
    \begin{subfigure}[t]{0.48\textwidth}
        \centering
        \includegraphics[width=\textwidth]{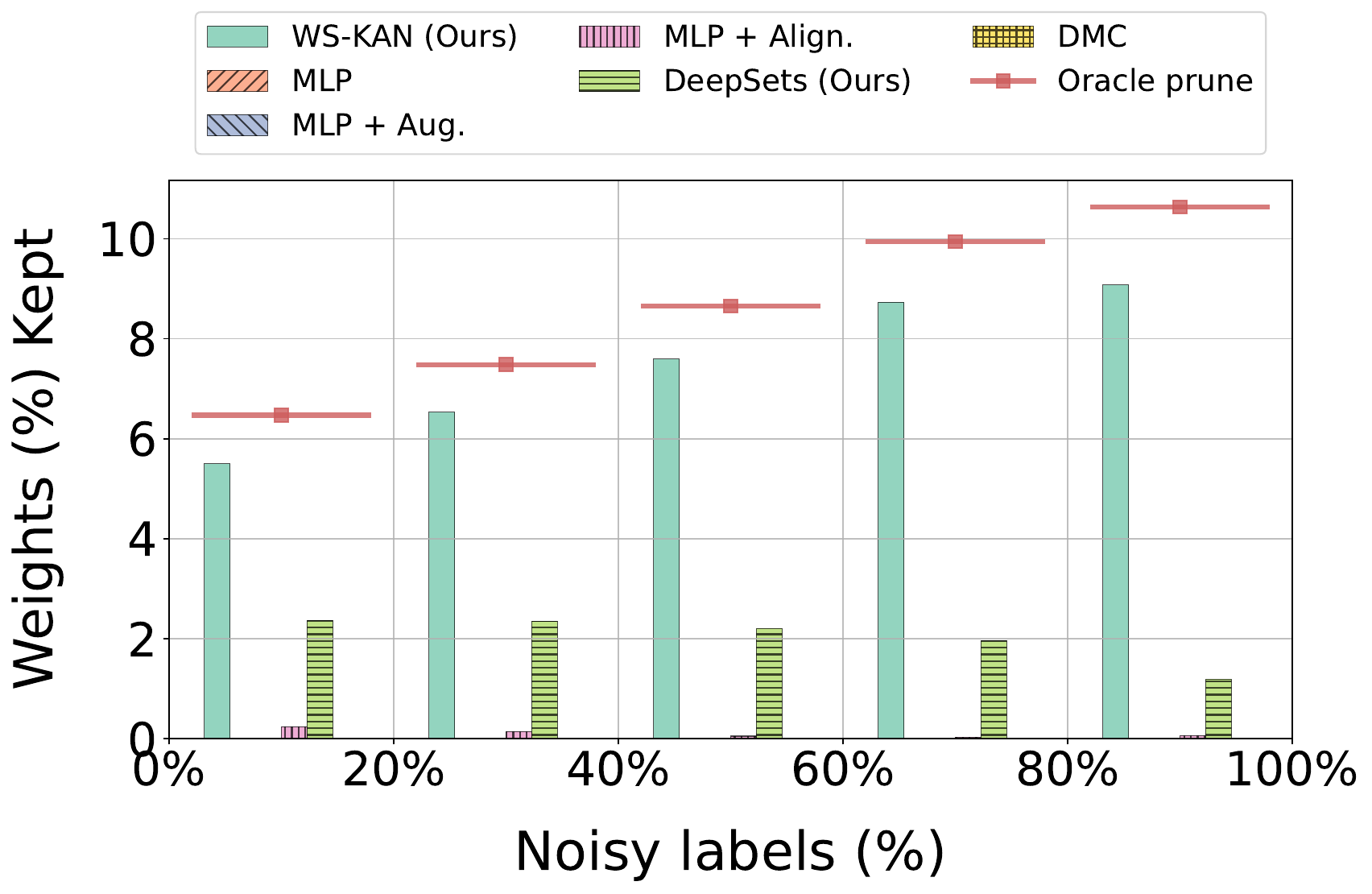}
        \caption{Downstream prune kept weight \%.}
        \label{fig:prune_weight_keptKmnist}
    \end{subfigure}%
    \caption{\textbf{Downstream pruning performance across methods over \KMst.} 
    We report: (i) \textbf{Test accuracy}: the downstream accuracy of pruned networks, averaged over non-overlapping bins of 20\%, to highlight the relative effectiveness of pruning strategies under varying noise levels -- \Cref{fig:prune_accKmnist};
    (ii) \textbf{Kept weights}: the percentage of weights retained after pruning, averaged over the same bins as in (i) -- \Cref{fig:prune_weight_keptKmnist}.}
    \label{fig:kmnist pruning}
\end{figure}

\paragraph{Additional results.} In \Cref{fig:Fminst prunine,fig:kmnist pruning}, we present the downstream pruning results on \FMst and \KMst, respectively. As shown in \Cref{fig:Fminst prunine}, \ourmethod\ achieves the best performance, striking the most favorable trade-off between accuracy and parameter sparsity, closely approaching the oracle prune that serves as our supervision. Meanwhile, the other baselines perform markedly worse. In contrast, \Cref{fig:kmnist pruning} shows that DS is also highly effective, performing on par with \ourmethod.

\subsection{Ablation: positional encoding and bidirectional message passing}
\label{app:Ablation studies}

Here we present our ablation study to evaluate the importance of both the positional encoding and the bidirectional message passing. Specifically, we report all results in the paper without the positional encoding and the bidirectional message passing.

\paragraph{INR Classification Accuracy.} The results are provided in \Cref{tab:inr-classification ablation}.

\begin{table}[h!]
\centering
\caption{\textbf{Ablation study} -- INR classification accuracy.}
\label{tab:inr-classification ablation}
    \begin{tabular}{lccc}
    \hline
    Models & \Mst & \FMst & \Cif \\
    \hline
    \textbf{\ourmethod -- no PE}      & $\mathbf{94.3 \pm 0.4}$ & $84.5 \pm 0.1$      & $\mathbf{46.1 \pm 1.2}$ \\
    \textbf{\ourmethod -- no BIDIR}   & $93.8 \pm 0.2$          & $83.4 \pm 0.1$      & $10.0 \pm {\scriptstyle <} 0.1$          \\
    \textbf{\ourmethod}               & $\mathbf{94.3 \pm 0.5}$ & $\mathbf{84.6 \pm 0.6}$ & $42.2 \pm 0.8$      \\
    \hline
    \end{tabular}
\end{table}

\paragraph{Accuracy Prediction.} The results are provided in \Cref{tab:accuracy-mse-ablation,tab:accuracy-r2-ablation}.

\begin{table}[h!]
\centering
\caption{\textbf{Ablation study} -- Accuracy prediction (MSE) [$\times 10^{3}$].}
\label{tab:accuracy-mse-ablation}
    \begin{tabular}{lccc}
    \hline
    Model & \Mst (MSE) & \FMst (MSE) & \KMst (MSE) \\
    \hline
    \textbf{\ourmethod -- no PE}      & $\mathbf{2.53 \pm 0.22}$ & $\mathbf{2.44 \pm 0.50}$ & $\mathbf{1.10 \pm 0.17}$ \\
    \textbf{\ourmethod -- no BIDIR}   & $4.03 \pm 0.62$          & $3.64 \pm 0.12$          & $1.50 \pm 0.08$          \\
    \textbf{\ourmethod}               & $3.29 \pm 0.17$          & $2.94 \pm 0.13$          & $1.45 \pm 0.08$          \\
    \hline
    \end{tabular}
\end{table}

\begin{table}[h!]
\centering
\caption{\textbf{Ablation study} -- Accuracy prediction ($R^2$).}
\label{tab:accuracy-r2-ablation}
    \begin{tabular}{lccc}
    \hline
    Model & \Mst ($R^2$) & \FMst ($R^2$) & \KMst ($R^2$) \\
    \hline
    \textbf{\ourmethod -- no PE}      & $\mathbf{96.00 \pm 0.35}$ & $\mathbf{93.57 \pm 1.32}$ & $\mathbf{96.62 \pm 0.51}$ \\
    \textbf{\ourmethod -- no BIDIR}   & $93.64 \pm 0.98$          & $90.40 \pm 0.31$          & $95.54 \pm 0.23$          \\
    \textbf{\ourmethod}               & $94.81 \pm 0.27$          & $92.27 \pm 0.35$          & $95.69 \pm 0.24$          \\
    \hline
    \end{tabular}
\end{table}

\paragraph{Pruning Mask Prediction.} The results are provided in \Cref{tab:pruning-accuracy-ablation,tab:pruning-auc-ablation}.

\begin{table}[h!]
\centering
\caption{\textbf{Ablation study} -- Pruning mask prediction accuracy.}
\label{tab:pruning-accuracy-ablation}
    \begin{tabular}{lccc}
    \hline
    Model & \Mst (Accuracy) & \FMst (Accuracy) & \KMst (Accuracy) \\
    \hline
    \textbf{\ourmethod -- no PE}      & $97.77 \pm 0.28$ & $98.68 \pm 0.04$ & $96.28 \pm 0.45$ \\
    \textbf{\ourmethod -- no BIDIR}   & $96.74 \pm 0.27$ & $98.27 \pm 0.04$ & $95.39 \pm 0.14$ \\
    \textbf{\ourmethod}               & $\mathbf{97.93 \pm 0.19}$ & $\mathbf{98.93 \pm 0.05}$ & $\mathbf{97.72 \pm 0.14}$ \\
    \hline
    \end{tabular}
\end{table}

\begin{table}[h!]
\centering
\caption{\textbf{Ablation study} -- Pruning mask prediction AUC.}
\label{tab:pruning-auc-ablation}
    \begin{tabular}{lccc}
    \hline
    Model & \Mst (AUC) & \FMst (AUC) & \KMst (AUC) \\
    \hline
    \textbf{\ourmethod -- no PE}      & $99.51 \pm 0.09$ & $99.59 \pm 0.02$ & $98.85 \pm 0.21$ \\
    \textbf{\ourmethod -- no BIDIR}   & $98.38 \pm 0.20$ & $98.99 \pm 0.01$ & $97.11 \pm 0.37$ \\
    \textbf{\ourmethod}               & $\mathbf{99.54 \pm 0.01}$ & $\mathbf{99.72 \pm 0.02}$ & $\mathbf{99.46 \pm 0.09}$ \\
    \hline
    \end{tabular}
\end{table}

We observe that \ourmethod performs well even without positional encoding. However, on the more challenging equivariant task—pruning mask prediction—\ourmethod with positional encoding outperforms the version without it in all 6/6 cases. This indicates that positional encoding plays a particularly important role in enhancing \ourmethod's performance on equivariant tasks. We also note that bidirectional message passing is crucial: across all tasks and dataset combinations, \ourmethod without bidirectional message passing consistently underperforms the version that includes it.

\subsection{Complexity and runtime analysis}
\label{app:Complexity and runtime analysis}

We report the running times observed in the INR classification tasks. \Cref{tab:timing} summarizes the average training and testing time per epoch (in seconds), computed over 10 epochs and presented together with the corresponding standard deviation. The results indicate that our runtimes are consistent with those commonly observed for standard GNN models on typical graph benchmarks.

\textbf{Complexity analysis.} We assume the input KAN contains $n_l$ nodes at layer $l \in \{1, \ldots, L\}$, and that all nodes and edges share the same feature dimensionality, which we treat as a constant. The total number of nodes is therefore
\[
N = \sum_{l=1}^L n_l,
\]
and the total number of edges—due to full connectivity between consecutive layers—is
\[
E = \sum_{l=1}^{L-1} n_l \, n_{l+1}.
\]

Referring to our message-passing mechanism, as presented in \Cref{subsec:Learning on the KAN-graph}, we obtain the following:

\begin{itemize}
    \item[(a)] The function $\texttt{MLP}^{(1;F)}_v$ is applied once per edge, giving $E$ operations. Aggregating over the incoming neighbors of each node contributes another $E$ operations, since the graph is directed and each edge is counted once. Thus, step (a) costs $O(E)$.

    \item[(b)] This step mirrors (a), except that the edge directions are transposed. Therefore, it also costs $O(E)$.

    \item[(c)] This step applies an MLP independently to each edge, contributing another $O(E)$ operations.

    \item[(d)] This step is a per-node operation, giving a cost of $O(N)$. Because $E > N$, the edge-wise computations dominate the total cost.
\end{itemize}

Hence, the overall complexity is
\[
O(E).
\]

\begin{table}[h!]
\centering

    \caption{Running times measured (in seconds) for all experiments conducted on the INR classification task.}
    \label{tab:timing}
    \begin{tabular}{lccc}
    \hline
    \textbf{Dataset} & \textbf{MNIST} & \textbf{F-MNIST} & \textbf{CIFAR-10} \\
    \hline
    Train time per epoch (s) & $10.12 \pm 0.06$ & $10.18 \pm 0.05$ & $10.88 \pm 0.10$ \\
    Test time per epoch (s)  & $2.84 \pm 0.12$  & $2.81 \pm 0.06$  & $3.05 \pm 0.08$  \\
    \hline
    \end{tabular}

\end{table}

\section{Aligning Kolmogorov-Arnold Networks}
\label{app:Aligning Kolmogorov-Arnold Networks}
A complementary strategy to designing permutation-equivariant architectures, is to instead keep the model class simple and canonize the input. In our context, the core idea would be to map each KAN into a fixed canonical form by permuting its neurons so that two networks differing only by permutations (and thus computing the same function) are aligned to the same representation.

A particularly interesting case arises in the context of MLPs as discussed in \citet{ainsworth2022git}.  
In that case, $\Theta$'s are collection of scalers, and a simple choice of $\text{dist}$ to be the Euclidean distance $\|\cdot\|_2$, the alignment problem becomes
\begin{equation}
\label{eq:align-MLP}
\argmin_{\pi} \; \left\| \operatorname{vec}(\Theta_A) - \operatorname{vec}\!\bigl(\pi(\Theta_B)\bigr) \right\|_2,
= \argmax_{\pi} \; \operatorname{vec}(\Theta_A) \cdot \operatorname{vec}\!\bigl(\pi(\Theta_B)\bigr).
\end{equation}

We can re-express this in terms of the full weights,
\begin{equation}
\argmax_{\pi = \{P_i\}} 
    \langle \mathbf{W}^{(A)}_{1},\, \bm{P}_{1} \mathbf{W}^{(B)}_{1} \rangle_{F}
    + \langle \mathbf{W}^{(A)}_{2},\, \bm{P}_{2} \mathbf{W}^{(B)}_{2} \bm{P}_1^{\top} \rangle_{F}
    + \cdots
    + \langle \mathbf{W}^{(A)}_{L},\, \mathbf{W}^{(B)}_{L} \bm{P}_{L-1}^{\top} \rangle_{F}.
    \label{eq:align-mlp-full-weights}
\end{equation}

Turning to KANs, the parameters $\Theta_A$ and $\Theta_B$ correspond to \emph{collections of univariate functions}, rather than scalars. To align such models, we must therefore define a distance function appropriate for functions. Importantly, there is no single ``correct'' choice here---different definitions may lead to different alignment outcomes.

In this paper, consistent with our formalism in \Cref{subsec:KAN-graph}, we associate each such entry in $\Theta$, with a parameter vector of the form
\[
(\Theta_A)_i = [w_b, \; w_s, \; \mathbf{c}],
\]
where, for the sake of this example and with a slight abuse of notation, $w_b$, $w_s$, and $\mathbf{c}$ are the coefficients defining the corresponding $i$-th 1D function. For convenience, we refer to these coefficients as \emph{channels}, and denote the $c$-th channel of this vector by $(\Theta_A^c)_i = [w_b, \; w_s, \; \mathbf{c}]_c$.  

With this structure, we define the alignment objective as the sum of $\ell_2$ distances across all channels:
\begin{equation}
\label{eq:align-KANs}
 \arg\min_{\pi} \; \sum_c \; \left\| \operatorname{vec}(\Theta_A^c) - \operatorname{vec}\!\bigl(\pi(\Theta_B^c)\bigr) \right\|_2 
=  \arg\max_{\pi} \; \sum_c \; \operatorname{vec}(\Theta^c_A) \cdot \operatorname{vec}\!\bigl(\pi(\Theta^c_B)\bigr),
\end{equation}
which parallels \Cref{eq:align-MLP}, but now summed across channels. Expanding this channel-wise objective yields
\begin{align*}
& \argmax_{\pi = \{P_i\}} \sum_c \bigg(
    \langle \mathbf{\phi}^{(c;A)}_{1},\, \bm{P}_{1} \mathbf{\phi}^{(c;B)}_{1} \rangle_{F}
    + \langle \mathbf{\phi}^{(c;A)}_{2},\, \bm{P}_{2} \mathbf{\phi}^{(c;B)}_{2} \bm{P}_1^{\top} \rangle_{F}
    + \cdots
    + \langle \mathbf{\phi}^{(c;A)}_{L},\, \mathbf{\phi}^{(c;B)}_{L} \bm{P}_{L-1}^{\top} \rangle_{F} \bigg) \\
&
= \;
 \bigg(\argmax_{\pi = \{P_i\}} 
    \langle \mathbf{\phi}^{(A)}_{1},\, \bm{P}_{1} \ast \mathbf{\phi}^{(B)}_{1} \rangle_{F}
    + \langle \mathbf{\phi}^{(A)}_{2},\, \bm{P}_{2} \ast \mathbf{\phi}^{(B)}_{2} \ast \bm{P}_1^{\top} \rangle_{F}
    + \cdots
    + \langle \mathbf{\phi}^{(A)}_{L},\, \mathbf{\phi}^{(B)}_{L} \ast \bm{P}_{L-1}^{\top} \rangle_{F} \bigg).
\end{align*}
In the second line, we move the sum over channels \(c\) inside each Frobenius inner product.  
As a result, the channel superscript \({}^c\) is omitted: the Frobenius inner product now implicitly sums over \(c\), absorbing the channel dimension.  Importantly, $\ast$ denotes a contraction operator that acts only on the first two indices, leaving the channel dimension untouched. Concretely, for the middle term we have:
\begin{equation}
 (\bm{P}_{2} \ast \mathbf{\phi}^{(B)}_{2} \ast \bm{P}_1^{\top})_{p, q, c} \coloneqq \sum_{p',q'} (\bm{P}_2)_{p, p'} \, (\mathbf{\phi}^{(B)}_{2})_{p', q', c} \, (\bm{P}_1^{\top})_{q, q'}.
\end{equation}

This formalism provides a straightforward extension of the alignment framework of \cite{ainsworth2022git} for MLPs to the KAN setting: their algorithm can be directly applied once the $\ast$ operation is defined as above.  

\subsection{Aligning a dataset of trained KANs}
\label{app:Aligning a dataset of trained KANs}
While the formalism in \Cref{app:Aligning Kolmogorov-Arnold Networks} provides a method for aligning a pair of KANs, our experiments require aligning an entire dataset of KANs. To achieve this, we integrate our alignment technique with the MergeMany algorithm introduced in \citet{ainsworth2022git}. The central idea is to iteratively run the alignment procedure between one model and the average of all other models.

In practice, we observed that convergence times could be prohibitively long. To address this, we adopted the following strategy: we fixed the error tolerance at $0.001$, then randomly sampled 1000 models and applied the MergeMany algorithm to this subset, consistent with \citet{navon2023equivariant}. The outcome was a new model representing the average of the aligned subset. This model then served as the reference for aligning the remaining training models as well as the test models, ultimately yielding fully aligned training and test sets.

Finally, we note that in our formulation, all channels are equally weighted. While this simplifies the alignment, it may be suboptimal in certain cases. Alternative weighting schemes could yield improved results. We leave a detailed exploration of such extensions to future work.

\section{Proofs}
\label{app:Proofs}

\SplineApprox*
\begin{proof}
The claim follows directly from the universal approximation theorem \citet{cybenko1989approximation}.  
The input vector to the $\texttt{MLP}$ lies in the compact domain
\[
(x, w_s, w_b, \bm{c}) \in \mathcal{X} \times \mathcal{W} \times \mathcal{W} \times \mathcal{C},
\]
which is compact since finite cartesian products of compact sets are compact. Moreover, the target function $\psi(\cdot)$ is continuous, as it is a weighted sum of continuous functions—both the silo function $b(\cdot)$ and the B-spline $B(\cdot)$ are continuous. Therefore, by the universal approximation theorem, there exists an $\texttt{MLP}$ that uniformly approximates $\psi$ on this compact domain to arbitrary accuracy $\varepsilon > 0$. This completes the proof.
\end{proof}

\KANApprox*
\begin{proof}

We recall that $f_\theta(\bmx) = (\bmp^L \circ \ldots \circ \bmp^1) \bmx$ is a composition of continuous functions. We denote by $\bm{v}^l$ the nodes in the KAN-graph that correspond to the activations in layer $l$, defined $\mathbf{a}^l$. We prove via induction, that $l+1$ layers of \ourmethod, can uniformly approximate the output $\mathbf{a}^{l+1} = (\bmp^{l+1} \circ \ldots \circ \bmp^1) \bmx$ over the nodes $\bm{v}^{l+1}$ to arbitrary precision.

\textbf{Induction proof.} We begin by defining a compact space, $\mathcal{S}$, which encompasses all possible values attained by any neuron within the KAN. Since the input to the KAN lies in $[a,b]^n$ (which is compact) and the applied univariate functions are continuous, their outputs form compact sets. With a mild abuse of terminology, we define the compact output space to be a compact set that includes the outputs corresponding to all inputs in the unit ball $B(x,1)$ around each input. Each neuron then receives a finite sum of elements, each belonging to a compact set, which implies that the neuron’s value also lies within a compact set. Moreover, because the number of neurons is finite, the union of all such compact sets is itself contained in a larger compact domain. Thus, there exists a compact space, defined as $\mathcal{S}$, that contains the values of all neurons. Additionally, we denote the maximal width of the network by $N_\text{max}$.

We now proceed with a proof by induction.

\textbf{Step 1. Initialization.}  
We define the KAN’s input over the \emph{KAN-graph} as follows,
\[
\bm{v}^0 = \bm{x}, 
\quad 
\bm{v}^{l>0} = \bm{0}, 
\quad 
\bm{e}^l_{p,q} = \tilde{\bmp}^l_{p,q} := [w_{b;p,q}^l,\, w_{s;p,q}^l,\, \bm{c}^l_{p,q}],
\]
so that the node features at layer $0$ encode the KAN’s input $\bm{x}$, while edge features store the spline parameters. Thus, the base case is satisfied.

\textbf{Step 2. Induction hypothesis.}  
We assume that after $l$ layers of \ourmethod\ message passing we have
\[
\| \bm{v}^l - \bm{a}^l \| < \delta,
\]
where $\bm{a}^l$ denotes the activations at layer $l$ of the KAN, and $\delta > 0$ is an arbitrarily small approximation error \textbf{that we can control}. Given $\epsilon >0$, we now show that we are able to obtain $\bm{v}^{l+1}$ such that $\|\bm{v}^{l+1} - \bm{a}^{l+1}\| < \epsilon$.

\textbf{Step 3. Inductive step.}  

We recall a given \ourmethod\ update equation:
\begin{align*}
&\bm{v}^{\text{F}}_i \leftarrow 
\texttt{MLP}_v^{(2;\,\text{F})} \Big(\bm{v}_i,\!\!\!\!\!\!\! 
    \sum_{j:\,\bm{e}(i,j)\in E} \!\!\!\!\!\!\texttt{MLP}_v^{(1;\,\text{F})}(\bm{v}_j,\bm{e}_{(i,j)})\Big),~~
\bm{v}^{\text{B}}_i \leftarrow
\texttt{MLP}_v^{(2;\,\text{B})}\!\Big(\bm{v}_i,\!\!\!\!\!\!\! 
    \sum_{j:\,\bm{e}(i,j)\in E^T} \!\!\!\!\!\!\!\texttt{MLP}_v^{(1;\,\text{B})}(\bm{v}_j,\bm{e}_{(i,j)})\Big),
    \\
&\bm{e}_{(i,j)} \leftarrow \texttt{MLP}_e(\bm{v}_i,\bm{v}_j,\bm{e}_{(i,j)}), 
\hspace{2.7cm}
\bm{v}_i \leftarrow \texttt{MLP}_v^{(3)}(\bm{v}_i,\bm{v}^{\text{F}}_i,\bm{v}^{\text{B}}_i).
\end{align*}

The KAN update rule at layer $l+1$ is
\[
\bm{a}^{l+1}_p = \sum_q \bmp^l_{p,q}(\bm{a}^l_q),
\]
where $\bmp^l_{p,q}$ denotes a univariate spline function parameterized by $(w_{b;p,q}^l, w_{s;p,q}^l, \bm{c}^l_{p,q})$.

To reproduce this update within \ourmethod, we recall that $\texttt{MLP}^{\text{univariate}}$ can be chosen to be an arbitrarly close approximation of the univariate function composting the KAN, recall \Cref{lemma:SplineApprox}, thus we choose $\texttt{MLP}^{\text{univariate}}$, such that 
\[
\|\texttt{MLP}^{\text{univariate}} (a, b, c, d) - \psi(a) \| < \epsilon/2N_\text{max},
\]
where $\psi$ stands for the 1D univariate functions with the parameters $(b,c,d)$.

Since $\texttt{MLP}^{\text{univariate}} (a, b, c, d)$ is continuous and defined on a compact set, it is uniformally continuous. Thus, there exists $\delta' > 0$ such that,
\[
\| a - a' \| <\delta' \Rightarrow \|\texttt{MLP}^{\text{univariate}} (a, b, c, d) - \texttt{MLP}^{\text{univariate}} (a', b, c, d)\| < \epsilon/2N_\text{max}.
\]
Thus, we fix $\delta < \min (\delta', 1)$, and from the induction hypothesis we get that there is a \ourmethod\ network that can guarantee $\| \bm{v}^l - \bm{a}^l \| < \delta$.

It holds that for a single neruon, given that $\| a - a' \| < \delta$,
\begin{align*}
& \big\| \texttt{MLP}^{\text{univariate}}(a', b, c, d) - \psi(a) \big\|  
=  \\
&\big\| \texttt{MLP}^{\text{univariate}}(a', b, c, d) - \texttt{MLP}^{\text{univariate}}(a, b, c, d) + \texttt{MLP}^{\text{univariate}}(a, b, c, d) - \psi(a) \big\|
\leq \\
&\big\| \texttt{MLP}^{\text{univariate}}(a', b, c, d) - \texttt{MLP}^{\text{univariate}}(a, b, c, d) \big\| + \big\| \texttt{MLP}^{\text{univariate}}(a, b, c, d) - \psi(a) \big\| \leq \\
&\epsilon/2N_\text{max} + \epsilon/2N_\text{max} = \epsilon/N_\text{max}
\end{align*}

We set,
\begin{align}
\texttt{MLP}_v^{(1;F)}\!\left(\bm{v}_j, \bm{e}_{(i,j)}\right) 
= \texttt{MLP}^{\text{univariate}}\!\left(\bm{v}^l_q, w_{b;p,q}^l, w_{s;p,q}^l, \bm{c}^l_{p,q}\right).
\end{align}
Thus, it holds that for $\|\bm{v}^l_q - \bm{a}^l_q  \|< \delta$, which is our induction assumption, we have,
\[
\|\texttt{MLP}_v^{(1;F)}\!\left(\bm{v}_j, \bm{e}_{(i,j)}\right)  - \bmp_{p,q}^l(\bm{a}^l_q) \| < \epsilon/N_\text{max}, \\
\]

We can also assign weight to the following MLP, to output the following precisely,
\begin{align}
\texttt{MLP}_v^{(2;F)}(a,b) &= b, \\
\end{align}

The forward aggregator $\bm{v}_i^{\text{forward}}$ thus becomes
\[
| v_q^{\text{forward}} - \sum_q \phi_{p,q}^l(a^l_q) | \leq \epsilon/N_\text{max},
\]
which approximates the KAN update in this layer.  

The backward aggregator, edge update, and auxiliary terms are irrelevant for this construction, so their associated MLPs, namely are set to the zero function, by assigning zero to all weight:
\begin{align}
    &\texttt{MLP}_v^{(1;B)} = \texttt{MLP}_v^{(2;B)} = \texttt{MLP}_e = 0, \label{eq:zeroMLPs}
\end{align}

Finally, we set the weights of $\texttt{MLP}_v^{(3)}$ such that $\texttt{MLP}_v^{(3)}(a,b,c) = b$. this ensures that $\bm{v}^{l+1}_q = \bm{v}_q^{\text{forward}}$. 
Thus.
\[
\| \bm{v}^{l+1} - \bm{a}^{l+1} \|_2 
= \left( \sum_p \big( v^{l+1}_p - a^{l+1}_p \big)^2 \right)^{\tfrac{1}{2}} \leq (\sum_p (\epsilon/ N_\text{max})^2)^{\tfrac{1}{2}} \leq  \epsilon/ \sqrt{N_\text{max}} < \epsilon
\]

\end{proof}

\KANSymmetries*

\begin{proof}
We compute the $p$-th output component:
\begin{align*}
    f_\theta(\bmx)_p 
    = (\bmp^2 \circ \bmp^1 \circ \bmx)_p 
    = \sum_{q,k}\bmp^2_{p,q}(\bmp^1_{q,k}(\bmx_k))
    &= \sum_{q, k}\textcolor{orange}{\bmp^2_{p,\sigma(q)}}(\textcolor{red}{\bmp^1_{\sigma(q),k}}(\bmx_k)) 
    \\
    = \sum_{q,k} \textcolor{orange}{(\bmp^2\bm{P})_{p,q}}\textcolor{red}{(\bm{P}^\top\bmp^1)_{q,k}}(\bmx_k) 
    &= f_{\Tilde{\theta}}(\bmx)_p 
\end{align*}
The third equality holds by reindexing the summation using a permutation $\sigma$ corresponding to $\bm{P}$. Recalling \Cref{eq:permutation action}, the fourth equality follows from the identities 
$\textcolor{orange}{(\bmp^2 \bm{P})_{p,q} = \bmp^2_{p,\sigma(q)}}$ and 
$\textcolor{red}{(\bm{P}^\top \bmp^1)_{q,k} = \bmp^1_{\sigma(q),k}}$.
\end{proof}

\section{Large Language Model (LLM) Usage}
\label{app:Large Language Model (LLM) Usage}
We used large language models (LLMs) exclusively to support the writing process. Their role was limited to improving clarity in technical explanations, refining grammar and style, and enhancing overall readability. All research contributions—including experimental design, data analysis, and conclusions—are entirely our own. The LLMs served solely as tools to improve presentation quality and were not employed to generate research content or influence the substance of our work.

\end{document}